\documentclass{article} % For LaTeX2e
\usepackage{amsmath,amsfonts,bm}

\def\eqref#1{equation~\ref{#1}}
\def\1{\bm{1}}

\DeclareMathAlphabet{\mathsfit}{\encodingdefault}{\sfdefault}{m}{sl}
\SetMathAlphabet{\mathsfit}{bold}{\encodingdefault}{\sfdefault}{bx}{n}

\usepackage[utf8]{inputenc}   % UTF-8 input
\usepackage[T1]{fontenc}      % T1 font encoding
\usepackage{times}            % Times font

\usepackage{amsmath,amssymb,amsfonts,mathtools} % math symbols & tools
\usepackage{amsthm}           % theorem environments
\usepackage{graphicx}         % include graphics
\usepackage{subcaption}       % subfigures
\usepackage{wrapfig}          % wrap text around figures
\usepackage{float}            % improved float placement
\usepackage{booktabs}         % professional tables
\usepackage{multirow}         % table multirow cells
\usepackage{multicol}         % multi-column text
\usepackage{tabularx}         % tables with flexible width
\usepackage{colortbl}         % colored table cells
\usepackage{arydshln}         % dashed lines in tables

\usepackage{microtype}        % better typography
\usepackage{nicefrac}         % compact fractions like ½
\usepackage{url}              % simple URL typesetting
\usepackage[usenames,dvipsnames]{xcolor}       % color support
\usepackage[inline]{enumitem}         % custom enumerate/itemize
\usepackage[pagebackref,breaklinks,colorlinks]{hyperref} % hyperlinks
\usepackage[capitalize,noabbrev]{cleveref} % smart cross-refs

\usepackage{algorithm}        % algorithms (float)
\usepackage{algorithmic}      % algorithmic environment

\usepackage[textsize=tiny]{todonotes} % inline TODO notes
\usepackage{lipsum}           % dummy text
\usepackage{tocloft}          % table of contents formatting
\usepackage{etoc}             % per-section ToCs
\usepackage[toc,page,header]{appendix} % appendix management
\usepackage{etoolbox}         % booleans, toggles
\usepackage{scalerel}         % scalable symbols
\usepackage{placeins}         % \FloatBarrier
\usepackage{dsfont}           % double-struck symbols (e.g. ��)
\usepackage{leftindex}        % left subscripts/superscripts
\usepackage{overpic}          % overlay text on figures
\usepackage{cancel}           % strike-through in math
\usepackage{linguex}          % linguistics examples
\usepackage[most]{tcolorbox}  % colored boxes
\usepackage{mdframed}         % framed environments

\usepackage{tikz}             % drawing
\usetikzlibrary{arrows.meta,bending,positioning,3d} % TikZ extras
\usepackage{tikz-3dplot}      % 3D coordinate plots

\theoremstyle{plain}
\newtheorem{theorem}{Theorem}[section]
\newtheorem{proposition}[theorem]{Proposition}

\newtheorem{corollary}[theorem]{Corollary}

\theoremstyle{definition}

\theoremstyle{remark}

\definecolor{iccvblue}{rgb}{0.21,0.49,0.74}
\definecolor{cvprblue}{rgb}{0.21,0.49,0.74}
\definecolor{mygreen}{RGB}{129,178,154}
\definecolor{myblue}{RGB}{51,92,103}
\definecolor{myyellow}{RGB}{224,159,62}
\definecolor{myred}{RGB}{158,42,43}
\definecolor{mydarkred}{RGB}{84,11,14}
\definecolor{mywhite}{RGB}{255,243,176}
\definecolor{forestGreen}{RGB}{34,139,34}

\hypersetup{ 
    colorlinks,
    linkcolor={red!50!black},
    citecolor={blue!50!black},
    urlcolor={blue!80!black}
}

\newcommand{\RETURN}{\textbf{return}\ }

\makeatletter
\def\adl@drawiv#1#2#3{%
\hskip.5\tabcolsep
\xleaders#3{#2.5\@tempdimb #1{1}#2.5\@tempdimb}%
#2\z@ plus1fil minus1fil\relax
\hskip.5\tabcolsep}
\newcommand{\cdashlinelr}[1]{%
\noalign{\vskip\aboverulesep
\global\let\@dashdrawstore\adl@draw
\global\let\adl@draw\adl@drawiv}
\cdashline{#1}
\noalign{\global\let\adl@draw\@dashdrawstore
\vskip\belowrulesep}}
\makeatother

\usepackage[preprint]{icml2026}
\usepackage{times}
\usepackage{comment}
\date{}

\begin{document}

\twocolumn[
  \icmltitle{Multi-way Representation Alignment}
  % It is OKAY to include author information, even for blind submissions: the
  % style file will automatically remove it for you unless you've provided
  % the [accepted] option to the icml2026 package.

  % List of affiliations: The first argument should be a (short) identifier you
  % will use later to specify author affiliations Academic affiliations
  % should list Department, University, City, Region, Country Industry
  % affiliations should list Company, City, Region, Country

  % You can specify symbols, otherwise they are numbered in order. Ideally, you
  % should not use this facility. Affiliations will be numbered in order of
  % appearance and this is the preferred way.
  \icmlsetsymbol{equal}{*}

  \begin{icmlauthorlist}
    \icmlauthor{Akshit Achara}{kcl}
    \icmlauthor{Tatiana Gaintseva}{qmul}
    \icmlauthor{Matéo Mahaut}{upf}
    \icmlauthor{Pritish Chakraborty}{iitb}
    \icmlauthor{Viktor Stenby Johansson}{dtu}
    \icmlauthor{Melih Barsbey}{icl}
    \icmlauthor{Emanuele Rodol\`a}{sur,par}
    \icmlauthor{Donato Crisostomi}{sur}
  \end{icmlauthorlist}

  \icmlaffiliation{kcl}{King's College London, London, United Kingdom}
  \icmlaffiliation{qmul}{Queen Mary University of London, London, United Kingdom}
  \icmlaffiliation{upf}{Universitat Pompeu Fabra, Barcelona, Spain}
  \icmlaffiliation{iitb}{Indian Institute of Technology, Bombay}
  \icmlaffiliation{dtu}{Technical University of Denmark}
  \icmlaffiliation{icl}{Imperial College London, London, United Kingdom}
  \icmlaffiliation{sur}{Sapienza University of Rome}
  \icmlaffiliation{par}{Paradigma}

  \icmlcorrespondingauthor{Donato Crisostomi}{crisostomi@di.uniroma1.it}
  % \icmlcorrespondingauthor{Firstname2 Lastname2}{first2.last2@www.uk}

  % You may provide any keywords that you find helpful for describing your
  % paper; these are used to populate the "keywords" metadata in the PDF but
  % will not be shown in the document
  \icmlkeywords{Representation Learning, Interpretability, Universal Representations, Vision}

  \vskip 0.3in
]
\printAffiliationsAndNotice{}  % no special notice (required even if empty)

\begin{abstract}
    The Platonic Representation Hypothesis suggests that independently trained neural networks converge to increasingly similar latent spaces. However, current strategies for mapping these representations are inherently pairwise, scaling quadratically with the number of models and failing to yield a consistent global reference. In this paper, we study the alignment of $M \ge 3$ models. We first adapt Generalized Procrustes Analysis (GPA) to construct a shared orthogonal universe that preserves the internal geometry essential for tasks like model stitching. We then show that strict isometric alignment is suboptimal for retrieval, where agreement-maximizing methods like Canonical Correlation Analysis (CCA) typically prevail. To bridge this gap, we finally propose Geometry-Corrected Procrustes Alignment (GCPA), which establishes a robust GPA-based universe followed by a post-hoc correction for directional mismatch. Extensive experiments demonstrate that GCPA consistently improves any-to-any retrieval while retaining a practical shared reference space.
\end{abstract}

\section{Introduction}\label{introduction}
\begin{figure}[htbp]
    \centering
    \begin{tikzpicture}[scale=0.9, >=latex, font=\small]
        % --- First figure (Pairwise) ---
        \begin{scope}[local bounding box=first]
            \def\r{2}
            \def\angleX{90}
            \def\angleY{-30}
            \def\angleZ{210}

            \coordinate (X1) at (\angleX:\r);
            \coordinate (Y1) at (\angleY:\r);
            \coordinate (Z1) at (\angleZ:\r);
            % Pairwise arrows
            \draw[-latex] (X1) to[bend left] node[midway, right] {$\Omega_{Y \leftarrow X}$} (Y1);
            \draw[-latex] (Y1) to[bend left] node[midway, below] {$\Omega_{Z \leftarrow Y}$} (Z1);
            \draw[-latex] (Z1) to[bend left] node[midway, left] {$\Omega_{X \leftarrow Z}$} (X1);
            % Draw points
            \filldraw (X1) circle (1.5pt) node[above] {X};
            \filldraw (Y1) circle (1.5pt) node[below right] {Y};
            \filldraw (Z1) circle (1.5pt) node[below left] {Z};
        \end{scope}
        % --- Second figure (Cycle-Consistent), shifted to the right ---
        \begin{scope}[shift={(4.5,0)}, local bounding box=second]
            \coordinate (U) at (0,0);
            \def\r{2}
            \def\angleX{90}
            \def\angleY{-30}
            \def\angleZ{210}
            \coordinate (X2) at (\angleX:\r);
            \coordinate (Y2) at (\angleY:\r);
            \coordinate (Z2) at (\angleZ:\r);
            %
            % Arrows to U
            \draw[-latex] (X2) to[bend left] node[midway, right] {$\Omega_X^\top$} (U);
            \draw[-latex] (Y2) to[bend left] node[midway, below] {$\Omega_Y^\top$} (U);
            \draw[-latex] (Z2) to[bend left] node[midway, left] {$\Omega_Z^\top$} (U);
            % Arrows from U
            \draw[-latex] (U) to[bend left] node[midway, left] {$\Omega_X$} (X2);
            \draw[-latex] (U) to[bend left] node[midway, above] {$\Omega_Y$} (Y2);
            \draw[-latex] (U) to[bend left] node[midway, right] {$\Omega_Z$} (Z2);
            % Draw points
            \filldraw (X2) circle (1.5pt) node[above] {X};
            \filldraw (Y2) circle (1.5pt) node[below right] {Y};
            \filldraw (Z2) circle (1.5pt) node[below left] {Z};
            \filldraw (U) circle (1.5pt) node[above right] {U};
        \end{scope}
    \end{tikzpicture}
    \caption{Pairwise alignment (left) learns a separate map for each ordered pair, which does not enforce consistency when maps are composed. Universe alignment (right) learns one map per model into a shared reference $U$, enabling translation between models by composition.
}
    \label{fig:cycle-cons-teaser}
\end{figure}
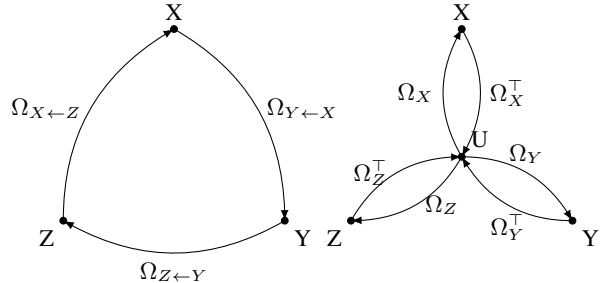

Deep networks succeed largely because of the representation spaces they build. Strikingly, these spaces often resemble each other even when the models differ in architecture, data, optimization, or random seed \citep{kornblith2019similarity, moschellarelative, maiorca2023latent, huh2024platonic}. This observation is captured by the Platonic Representation Hypothesis: independently trained models tend to recover a shared statistical structure of the world in their latent representations \citep{huh2024platonic}.

Motivated by these findings, a growing body of work studies representation alignment as a way to make independently trained models interoperable. Methods range from simple linear or orthogonal mappings \citep{maiorca2023latent} to richer functional correspondences \citep{fumero2024latent}. When alignment succeeds, it unlocks practical use cases including model stitching, cross-modal transfer, and zero-shot composition across models \citep{maiorca2023latent, moschellarelative, norelli2023asif}.

Yet most alignment pipelines are still pairwise (\cref{fig:cycle-cons-teaser}, left): they fit one map for each ordered pair of models and train each map independently. This is poorly matched to settings where many models are used together and translations are obtained by composing maps. In the $M$-model regime, pairwise alignment scales quadratically in the number of maps, requires fitting $(M{-}1)$ new maps to incorporate a new model, and provides no mechanism to ensure that compositions are consistent. As a result, the translation from $X$ to $Z$ can differ depending on whether it is learned directly or obtained by composing through intermediate models, even when each pairwise map performs well under pairwise evaluation. This can make pairwise alignment unreliable as a shared reference for larger collections of models.

A natural alternative is to factor translation through a shared universe space (\cref{fig:cycle-cons-teaser}), learning one map per model to and from a common reference. In the orthogonal case this yields a generalized Procrustes formulation. Translation between arbitrary model pairs is obtained via the universe, which reduces the number of learned maps to $O(M)$ and provides a shared coordinate system that can be reused across tasks. Importantly, this construction encourages consistency under composition: the translation between any two models is unique and independent of the path taken through intermediate models.
This supports multi-model use, simplifies model addition, and enables shared-coordinate analyses such as probing and aggregation.

% In this work, we study alignment among three or more models using a shared universe as a common reference. We first adapt Generalized Procrustes Analysis (GPA)~\cite{schonemann1966generalized,gower1975generalized} and Generalized Canonical Correlation Analysis (GCCA)~\cite{horst1961generalized,kettenring1971canonical} to representation alignment and show that they target complementary objectives. GPA yields an orthogonal universe that preserves each model’s internal structure, which supports tasks that require geometric fidelity such as probing and stitching. GCCA instead emphasizes cross-model agreement and often favors similarity-based retrieval, where matched items should be directly comparable under a shared similarity measure. To bridge these regimes, we propose Geometry-Corrected Procrustes Alignment (GCPA), which first constructs an orthogonal universe with GPA and then applies a small shared correction in universe coordinates that reduces residual directional mismatch while limiting departures from the GPA geometry.

% We evaluate these methods on benchmarks spanning both probing and stitching, as well as multilingual, multimodal and cross-camera retrieval, additionally studying the multi-model regime directly by using three or more models jointly rather than only comparing pairs. The shared-universe construction preserves strong pairwise performance after mapping while improving multi-model evaluations, and the corrected universe yields consistent gains without waiving a practical shared reference space.

In this work, we study the alignment of three or more models through a shared universe reference. We first adapt Generalized Procrustes Analysis (GPA)~\citep{schonemann1966generalized,gower1975generalized} to the representation alignment setting, demonstrating that its orthogonal nature yields a universe that preserves each model's internal structure, a property essential for tasks requiring geometric fidelity such as model stitching. We then empirically show that geometry preservation alone is insufficient for tasks like zero-shot retrieval, as methods maximizing cross-model agreement such as Generalized Canonical Correlation Analysis (GCCA)~\cite{horst1961generalized,kettenring1971canonical} outperform strict isometries. To bridge this gap, we propose Geometry-Corrected Procrustes Alignment (GCPA), which combines the best of both worlds. GCPA first establishes a robust orthogonal universe via GPA, then applies a post-hoc shared correction to minimize residual directional mismatch.

We evaluate these approaches on benchmarks spanning inter-architecture probing, multilingual translation, and cross-camera retrieval. Our results show that while naturally multi-way techniques are necessary for scaling, the specific choice of objective matters: while GPA provides a consistent geometric reference, the proposed geometry-corrected universe (GCPA) is required to achieve state-of-the-art performance in any-to-any retrieval while retaining the practical benefits of the shared Procrustes frame.

\paragraph{Contributions.}
\begin{enumerate}
    \item We extend orthogonal representation alignment to three or more models by adapting Generalized Procrustes Analysis (GPA) to the neural representation setting.
    \item We demonstrate that while GPA provides a consistent global reference superior to pairwise maps, it lags behind agreement-maximizing methods (like GCCA) on retrieval tasks.
    \item We introduce Geometry-Corrected Procrustes Alignment (GCPA) to bridge this gap, establishing a robust universe via GPA and applying a post-hoc correction for directional mismatch to achieve superior performance across diverse datasets and settings.
\end{enumerate}

\section{Related Work}\label{relatedwork}
\paragraph{Representational similarity}
Work on representational similarity typically (i) measures how closely different learned spaces correspond
\citep{kornblith2019similarity, klabunde2025similarity, glielmoRanking, huh2024platonic, acevedo2025approachidentifysemanticallyinformative},
(ii) leverages such similarity to align spaces via explicit maps
\citep{moschellarelative, maiorca2023latent, fumero2024latent, cannistracibricks},
or (iii) studies similarity and alignment jointly \citep{fumero2024latent}.
The applications span multi-modal alignment~\citep{norelli2023asif, Cicchetti2025-ft, yue2025escaping, groger2025limited}, cross-view alignment~\citep{huang2025c3po}, as well as cross-lingual alignment \citep{jawanpuria2019learning}.
While there are methods not requiring paired data~\cite{pmlr-v89-grave19a, jha2026harnessinguniversalgeometryembeddings, schnaus2025s}, we study in this paper the common case where the correspondence is available.
A common and effective alignment tool in this line is Procrustes analysis, which fits an orthogonal map between two spaces and preserves their internal distances and angles
\citep{hurley1962procrustes, pmlr-v89-grave19a, maiorca2023latent}.
More broadly, our setting can be viewed as a special case of multi-space representation learning focused on alignment~\citep{li2018survey}, and is connected to backward-compatible representation learning, which aims to keep updated embeddings comparable to earlier ones \citep{shen2020towards, zhang2022towards}, as well as to communication between pretrained networks \citep{mahaut2025referential, rameshcommunicating}.

\paragraph{Multiset alignment}
An alternative to pairwise mapping is to learn a shared latent space from multiple views via multiset canonical correlation analysis (GCCA) and related variants \citep{horst1961generalized, kettenring1971canonical, fu2017scalable, sorensen2021generalized}. These methods provide a principled $M$-way objective and are natural baselines for multi-view alignment, especially in retrieval-style settings where matched items should be directly comparable in a common set of coordinates. In contrast, our focus is on a single shared reference learned with one map per model, so that translations between models are obtained by composing through the reference and new models can be incorporated by fitting only one additional map.

\section{Methodology}\label{methodology}
We consider the problem of aligning $M \ge 3$ distinct representation spaces $\{X_m\}_{m=1}^M$, where $X_m \in \mathbb{R}^{N \times d}$ contains $N$ matched samples (or is padded to a common dimension). In our setup, all representations are standardized within each space.  Our goal is to establish a shared coordinate system that enables any-to-any translation.

\subsection{The Universe Factorization}
\label{sec:universe_factorization}

Standard alignment approaches are inherently pairwise. To map space $n$ to space $m$, one typically minimizes the discrepancy between matched pairs:
\begin{equation}
\mathcal{L}_{\text{pair}}(\Omega) = \sum_{m<n} \| X_m - X_n \Omega_{m \leftarrow n} \|_F^2.
\end{equation}
While effective for two models, this approach scales quadratically, requiring $O(M^2)$ maps for $M$ models. Furthermore, it lacks a global reference; the composition of learned maps often violates cycle consistency (i.e., $\Omega_{m \leftarrow k} \Omega_{k \leftarrow n} \neq \Omega_{m \leftarrow n}$), making multi-hop translation unreliable.

To address this, we adopt an \textbf{object-to-universe factorization}. Rather than learning pairwise maps directly, we assume the existence of a shared statistical ``universe'' $U$ and learn a single map $\Omega_m$ from each model into this reference:
\begin{equation}
\Omega_{m \leftarrow n} := \Omega_m \Omega_n^\top.
\end{equation}
This reduces the complexity from $O(M^2)$ to $O(M)$ and guarantees cycle consistency by design. The central question then becomes: \textit{What are the geometric properties of this universe, and how should $U$ be constructed?}

\subsection{Constructing an Isometric Universe (GPA)}
\label{sec:gpa}

For tasks such as model stitching or latent probing, it is critical that alignment does not distort the internal topology of the individual models. We therefore first consider the construction of an orthogonal universe, where each map $\Omega_m$ is constrained to the orthogonal group $O(d)$.

This formulation aligns with \textbf{Generalized Procrustes Analysis (GPA)}. We seek a consensus centroid $U$ and a set of rotations $\{\Omega_m\}$ that minimize the global dispersion:
\begin{equation}
\min_{\{\Omega_m \in O(d)\},\, U} \sum_{m=1}^M \|X_m \Omega_m - U\|_F^2
\label{eq:gpa_objective}
\end{equation}
Optimization proceeds via alternating minimization:
\begin{enumerate}
    \item \textbf{Update Consensus:} $U \leftarrow \frac{1}{M} \sum_{m=1}^M X_m \Omega_m$.
    \item \textbf{Update Maps:} Solve the orthogonal Procrustes problem for each $\Omega_m$ to align $X_m$ to the current $U$.
\end{enumerate}

Because $\Omega_m$ is an isometry, distances and angles within each model are preserved exactly in the universe ($\| \Omega_m x - \Omega_m y \| = \| x - y \|$). This makes GPA the natural choice for geometric interoperability.

\subsection{The Limits of Isometry in Retrieval}
\label{sec:retrieval_gap}

While GPA provides a robust geometric reference, empirical evidence suggests that strict isometry is suboptimal for \textbf{retrieval tasks}. In cross-modal or zero-shot retrieval, maximizing the cosine agreement between matched samples often requires reshaping the feature spaces to suppress modality-specific noise, a degree of freedom that orthogonal maps lack.

To quantify this ``retrieval gap,'' we contrast GPA with \textbf{Generalized Canonical Correlation Analysis (GCCA)}. GCCA relaxes the orthogonality constraint, allowing linear projections $\Phi_m$ that deform the spaces to maximize shared correlation:
\begin{equation}
\min_{\{\Phi_m\}} \sum_{i < j} \|X_i \Phi_i - X_j \Phi_j\|_F^2 \quad \text{s.t.} \quad \sum_{m=1}^M \Phi_m^\top \Phi_m = I.
\label{eq:gcca_objective}
\end{equation}

% \begin{proposition}[Optimality for Squared Discrepancy]
% The solution to the GCCA objective is given by the eigenvectors of the inter-model covariance matrix. Among all linear shared-basis embeddings, this solution globally minimizes the total pairwise mismatch energy.
% \end{proposition}

The following proposition highlights why GCCA often outperforms isometries in pure retrieval:

\begin{proposition}[GCCA minimizes squared discrepancy]
The global minimizer of Eq.~\ref{eq:gcca_objective} is obtained via a spectral decomposition of a matrix constructed from cross-view correlations (Theorem~\ref{thm:gcca_solution}). Among linear shared-basis embeddings satisfying the constraint, this solution globally minimizes the total pairwise mismatch energy.
\end{proposition}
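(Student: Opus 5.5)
We rewrite the objective in Eq.~\ref{eq:gcca_objective} as a trace quadratic form in the stacked variable $\Phi = [\Phi_1^\top,\dots,\Phi_M^\top]^\top \in \mathbb{R}^{Md\times k}$; the constraint $\sum_m \Phi_m^\top\Phi_m = I$ then reads $\Phi^\top\Phi = I$. Let $C_{ij} = X_i^\top X_j$, let $C$ be the block matrix with blocks $C_{ij}$, and let $D=\mathrm{blkdiag}(C_{11},\dots,C_{MM})$. Expanding each squared norm gives
\begin{equation*}
\sum_{i<j}\|X_i\Phi_i - X_j\Phi_j\|_F^2 = \sum_{i<j}\mathrm{tr}\bigl(\Phi_i^\top C_{ii}\Phi_i + \Phi_j^\top C_{jj}\Phi_j - 2\Phi_i^\top C_{ij}\Phi_j\bigr) = \mathrm{tr}\bigl(\Phi^\top (M D - C)\Phi\bigr).
\end{equation*}
To obtain the last equality, note that each diagonal block appears $M-1$ times. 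The off-diagonal terms sum to $\mathrm{tr}(\Phi^\top(C-D)\Phi)$, so the total is $(M-1)\mathrm{tr}(\Phi^\top D\Phi) - \mathrm{tr}(\Phi^\top(C-D)\Phi) = \mathrm{tr}(\Phi^\top(MD-C)\Phi)$. The matrix $L := MD - C$ is symmetric and positive semidefinite, being a sum of squares. It is the block Laplacian built from the cross-view correlations.

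The problem is therefore $\min_{\Phi^\top\Phi = I_k}\mathrm{tr}(\Phi^\top L\Phi)$. By the Ky Fan trace minimization theorem, equivalently Courant--Fischer or the Poincaré separation theorem, the minimum equals the sum of the $k$ smallest eigenvalues of $L$. It is attained by any orthonormal basis of the corresponding eigenspace, and it is unique up to right multiplication by $O(k)$ when there is a spectral gap. This is the spectral characterization referenced as Theorem~\ref{thm:gcca_solution}, and I would invoke it directly. The second sentence of the proposition then follows immediately. Any shared-basis linear embedding $\{\Phi_m\}$ satisfying the constraint is a feasible point of the same trace problem, so its total pairwise mismatch energy is at least the Ky Fan value. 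Hence the spectral solution is a global minimizer, not merely a stationary point, even though the problem is nonconvex.

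I expect the main subtlety to be \emph{bookkeeping rather than analysis}. First, one must check the coefficient $M$ on $D$ carefully; this is the routine expansion above. Second, one must address degeneracy. A trivial minimizer can arise from the nullspace of $L$, which contains vectors of the form $(v,v,\dots,v)$ whenever the $X_m$ share a common direction. That is acceptable: such directions are exactly the perfectly agreeing directions, and they are what the objective rewards. I would simply note that minimizers are determined only up to rotation within eigenspaces, and state the result for the eigenspace rather than for a single basis. Finally, I would remark on how this connects to the paper's point. The GPA maps $\Omega_m$, stacked and scaled by $1/\sqrt{M}$, are a feasible point of this problem when $k=d$. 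GCCA's objective value is therefore no larger than GPA's, which is the formal sense in which it wins on squared discrepancy.
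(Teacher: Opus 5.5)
Your proof is correct and has the same skeleton as the paper's: expand the pairwise sum into a trace form, read the constraint as $\Phi^\top\Phi=I$, apply Ky Fan, and note that the feasible shared-basis embeddings are exactly the feasible points of the trace problem (the paper's Corollary~\ref{corr:1}). The difference is the coordinates you work in. The paper's proof of Theorem~\ref{thm:gcca_solution} does not use the raw views. It states the objective in terms of the orthonormal factors $U_i$ of thin SVDs $X_i=U_i\Sigma_iV_i^\top$, and returns to feature space only afterwards, via $Q_i=V_i\Sigma_i^{-1}\Phi_i^\star$ (Corollary~\ref{corr:3}). Since $U_i^\top U_i=I$, your $D$ becomes the identity and your $L=MD-C$ becomes exactly the paper's $S=MI-U^\top U$ with $U=[U_1|\cdots|U_M]$. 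Its off-diagonal blocks $U_i^\top U_j$ encode the canonical correlations between views. Your derivation is thus the more literal reading of Eq.~\ref{eq:gcca_objective} and contains the paper's as the special case $X_i^\top X_i=I$. However, your sentence identifying $L$ with the matrix of Theorem~\ref{thm:gcca_solution} is true only after this whitening.

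Whitening is also what your degeneracy remark overlooks. In the raw formulation, $\ker L=\{\phi: X_1\phi_1=\cdots=X_M\phi_M\}$ contains every $\phi$ with $\phi_m\in\ker X_m$ for all $m$. This set is nonzero as soon as one view is rank-deficient, e.g.\ when zero-padded to a common dimension (as the paper allows) or when $N<d$. Columns drawn from it cost nothing while producing identically zero embeddings, which is a collapse rather than ``perfectly agreeing directions.'' More generally, the raw objective is not scale invariant, so the bottom eigenvectors of $L$ are drawn toward low-variance rather than highly correlated directions. In the whitened version, each $U_m$ has full column rank, so $U_m\phi_m=0$ forces $\phi_m=0$. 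Hence $\ker S$ corresponds exactly to $\bigcap_m\mathrm{col}(U_m)$, the genuinely shared subspace. None of this affects the global-minimality claim, which holds in both versions.

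Your closing GPA comparison is valid only in the raw formulation, and only up to the factor $1/M$ introduced by rescaling to $\Omega_m/\sqrt{M}$. In the whitened formulation the GPA embeddings correspond to $\Phi_m=\Sigma_mV_m^\top\Omega_m/\sqrt{M}$. Then $\sum_m\Phi_m^\top\Phi_m=\frac{1}{M}\sum_m\Omega_m^\top X_m^\top X_m\Omega_m$, which is not the identity in general, so these are not feasible points.
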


These considerations pinpoint a trade-off: GCCA achieves minimal mismatch (ideal for retrieval) but does so by distorting the internal geometry of the models (harmful for stitching/probing); GPA preserves geometry but suffers from residual directional mismatch.

\subsection{Geometry-Corrected Procrustes Alignment (GCPA)}
\label{sec:gcpa}

To bridge the gap between geometric fidelity and retrieval performance, we propose \textbf{Geometry-Corrected Procrustes Alignment (GCPA)}.

Our insight is that we can retain the robust orthogonal universe of GPA as a ``scaffold,'' and then apply a shared, non-linear correction to ``polish'' the alignment. This correction minimizes the residual directional mismatch that GPA leaves behind, without discarding the universe coordinate system.

\paragraph{The Consensus Principle.} We observe that in the GPA universe, the average direction of a matched sample across all $M$ models serves as a robust target. Let $\hat{u}_{m,i}$ be the unit-normalized vector for sample $i$ mapped into the universe by model $m$. We define the \textbf{consensus direction} $c_i$:
\begin{equation}
c_i = \operatorname{norm}\left( \frac{1}{M} \sum_{m=1}^M \hat{u}_{m,i} \right).
\label{eq:consensus}
\end{equation}
Moving individual representations toward this consensus strictly improves global agreement. The following identity links consensus agreement to total pairwise agreement:

\begin{proposition}\label{prop:consensus}
Fix a sample index $i$ and let $\{\hat{u}_{m,i}\}_{m=1}^M \subset \mathbb{S}^{d-1}$ denote the
unit-normalized universe directions for that sample across spaces. 

% Define the consensus direction
% \[
% c_i = \mathrm{norm}\!\left(\frac{1}{M}\sum_{m=1}^M \hat{u}_{m,i}\right).
% \]

Then
\begin{align}
    \frac{1}{M}\sum_{m=1}^M \langle \hat{u}_{m,i}, c_i \rangle
    &= \frac{1}{M}\left\|\sum_{m=1}^M \hat{u}_{m,i}\right\|_2, \label{eq:mean_to_consensus}\\
    \sum_{m<n}\langle \hat{u}_{m,i}, \hat{u}_{n,i}\rangle
    &= \frac{1}{2}\left(\left\|\sum_{m=1}^M \hat{u}_{m,i}\right\|_2^2 - M\right).
    \label{eq:pairwise_sum_identity}
\end{align}
\end{proposition}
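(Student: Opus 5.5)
The proof is a direct computation with the sum vector $s_i := \sum_{m=1}^M \hat{u}_{m,i}$. By \cref{eq:consensus}, $c_i = \operatorname{norm}(s_i/M) = s_i/\|s_i\|_2$, since the factor $1/M$ disappears under normalization. The only subtle point is the degenerate case $s_i = 0$, where $\operatorname{norm}$ is undefined. There we adopt the convention that $c_i$ may be any unit vector, or simply exclude this case. Both sides of \eqref{eq:mean_to_consensus} then vanish: the left side is $\frac{1}{M}\langle s_i, c_i\rangle = 0$ for any $c_i$, and the right side is $\|s_i\|_2/M = 0$. So the identity holds under any convention, and I will state this explicitly.

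For \eqref{eq:mean_to_consensus}, bilinearity of the inner product gives
\[
\frac{1}{M}\sum_{m=1}^M \langle \hat{u}_{m,i}, c_i\rangle = \frac{1}{M}\langle s_i, c_i\rangle .
\]
Substituting $c_i = s_i/\|s_i\|_2$ turns this into $\frac{1}{M}\|s_i\|_2^2/\|s_i\|_2 = \frac{1}{M}\|s_i\|_2$, which is the claim.

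For \eqref{eq:pairwise_sum_identity}, expand the squared norm of the sum:
\[
\|s_i\|_2^2 = \sum_{m=1}^M \|\hat{u}_{m,i}\|_2^2 + \sum_{m\neq n}\langle \hat{u}_{m,i},\hat{u}_{n,i}\rangle .
\]
Each $\hat{u}_{m,i}$ lies on $\mathbb{S}^{d-1}$, so the diagonal sum equals $M$. By symmetry of the inner product, the off-diagonal sum equals $2\sum_{m<n}\langle \hat{u}_{m,i},\hat{u}_{n,i}\rangle$. Rearranging gives the stated identity. This step needs no nondegeneracy assumption.

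There is no real obstacle here. The one point requiring care is the well-definedness of $c_i$ when the directions cancel, which is handled as above. I would close with a remark tying the two identities together: both the mean agreement with the consensus and the total pairwise agreement are increasing functions of $\|s_i\|_2$. This justifies the paper's claim that moving representations toward the consensus improves global agreement.
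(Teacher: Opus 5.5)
Your proof is correct and follows the paper's own argument: set $s=\sum_{m}\hat{u}_{m,i}$ and $c_i=s/\|s\|_2$, use bilinearity to get the first identity, and expand $\|s\|_2^2$ using the unit norms to get the second. Your explicit handling of the degenerate case $s=0$, where $c_i$ is undefined but both sides of the first identity vanish, is a small addition that the paper's proof leaves out.
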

\emph{i.e.} increasing the cosine similarity of each view to the consensus $c_i$ monotonically increases the sum of pairwise cosine similarities between all views.

\paragraph{The GCPA Algorithm.} GCPA operates in two stages: i) \textbf{GPA Initialization:} We first solve the standard GPA problem to obtain the orthogonal maps $\{\Omega_m\}$ and the base universe representations $X_m^\star = X_m \Omega_m$; ii) \textbf{Geometry Correction:} We train a lightweight, shared residual map $T_\theta$ (a small MLP) that acts on the universe coordinates. The objective minimizes the distance to the consensus while strictly penalizing deviation from the trusted GPA geometry:
\begin{equation}
\mathcal{L}_{\text{correct}} = - \sum_{m,i} \langle T_\theta(\hat{u}_{m,i}), c_i \rangle + \lambda \mathcal{L}_{\text{trust}},
\end{equation}
where $\mathcal{L}_{\text{trust}}$ penalizes $T_\theta$ if the angular deviation from the original GPA vector exceeds a tight threshold. By construction, $T_\theta$ returns a row-normalized direction and $c_i$ is the per-sample consensus direction. 

% Here, $\operatorname{norm}(\cdot)$ denotes L2 normalization (row-wise unit vectors), and $c_i$ is the per-sample consensus direction.

By sharing $T_\theta$ across all models, GCPA refines the universe itself rather than individual models. This allows us to achieve the high agreement typically associated with CCA-style methods while remaining anchored to the geometrically consistent Procrustes frame.

Additional theoretical details (including related theorems and corollaries) are deferred to Appendix~\ref{app:proofs}. Algorithmic summaries for GCPA and GCCA are provided in Appendix~\ref{app:gcpa_details} and Appendix~\ref{app:gcca_algo}. We provide an additional spectral analysis of the corrected universe in Appendix~\ref{app:structural_analysis}, showing how GCPA changes the principal structure of the aligned representations.
\section{Experiments}\label{experiments}
We study multi-model alignment through the lens of a shared universe reference. We begin with probing and stitching tasks, where preserving each model’s internal structure is important. We then consider aggregation in a common coordinate system through clustering. Finally, we study the task of retrieval, first in settings with strong shared structure where shared-basis methods are effective, and then in more heterogeneous settings where an orthogonal universe remains valuable but residual directional mismatch benefits from a universe-level correction.

\subsection{Baselines and evaluation}
\label{sec:baselines-eval}
We compare \textbf{No Alignment (NA)}, \textbf{Pairwise (PW)} orthogonal Procrustes, \textbf{$M$-way GPA}, \textbf{GCCA} (retrieval settings), and \textbf{GCPA}; see \cref{methodology} for definitions.

All alignment parameters are fit on the training split using sample identity to define correspondences across spaces, and results are reported on held-out test sets. For probing, we report stitching accuracy by training a linear probe on one space and evaluating it on another after mapping. For retrieval, we average performance over all ordered pairs $(i\!\rightarrow\!j)$ and include pairwise breakdowns where relevant. For retrieval experiments, we apply PCA on features to a common dimension $d$ prior to alignment (only when model features have different dimensions), fitting PCA on the training split and applying it to validation/test. For probing and clustering, we use the original feature spaces without PCA.

We use a fixed, dataset-agnostic GCPA configuration for all main results. Appendix~\ref{app:hyperparams} shows that additional gains are possible with an alternative set of parameters. We also discuss the computational costs of the methods in Appendix~\ref{app:computational-complexity}.

\subsection{Geometric interoperability}
\label{sec:gpa_exps}

\subsubsection{Multi-way Alignment Strengthens Weak Pairwise Links}
\label{sec:weak-links}
We investigate the ability of a shared universe to stabilize the alignment between two models that share little surface-level similarity. To test this, we isolate a fragile pair of models with poor direct correspondence induced by training them on disparate views of the data. We then attempt to align them via a universe of healthy anchor models. As a dataset, we utilize a version of CIFAR-100 \citep{krizhevsky2009cifar100} where we introduce a distribution shift by replacing 85\% of the images with binary edge maps, as described in further detail in Appendix~\ref{app:weak_links}.

We first establish a baseline by fitting a direct pairwise orthogonal map between the fragile pair. Due to the distribution shift between their training representations, this direct link performs poorly. We then construct an $M$-way GPA universe containing the fragile pair alongside a set of standard models. We progressively expand this set by adding more anchor models to the universe and re-evaluate the translation quality between the original fragile pair when routed through the shared reference.

\begin{figure}[t]
    \centering
    \includegraphics[width=\linewidth]{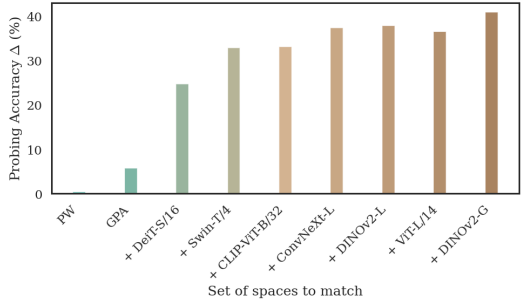}
    \caption{\textbf{Multi-way alignment stabilizes fragile connections.} On edge-heavy CIFAR-100, we isolate a ``weak'' model pair with poor alignment. By progressively expanding the universe with robust models and refitting the universe, we observe a monotonic increase in stitching accuracy between the original fragile pair.}\label{fig:expanding-matching-set}
\end{figure}

As shown in \cref{fig:expanding-matching-set}, the universe acts as a geometric hub. While the direct pairwise map fails to capture the correspondence, increasing the number of diverse models in the universe stabilizes the alignment. The shared reference leverages the structural consensus of the anchor models to triangulate the relationship between the fragile pair, effectively healing the weak link. (See \cref{app:weak_links} for experimental details on the expanding set). 
We also replicate this experiment for GCPA in \cref{app:weak_links}, showing a similar trend.

\subsubsection{Efficiently Extending the Universe}
\label{sec:add-model}
Once a universe is learned, it can be reused rather than rebuilt. Pairwise alignment requires learning $M{-}1$ new maps when extending an aligned set of $M$ models, whereas the shared-universe formulation integrates a new model $X_{M+1}$ by fitting a single orthogonal map $\Omega_{M+1}$ into the existing universe. Translation to and from any existing model then follows by composition:
\begin{equation}
\begin{aligned}
\Omega_{m\leftarrow (M+1)}=\Omega_m \Omega_{M+1}^\top,\\
\Omega_{(M+1)\leftarrow m}=\Omega_{M+1}\Omega_m^\top .
\label{eq:add-model-compose}
\end{aligned}
\end{equation}

On CIFAR-100 probing, we learn an $M$-way universe on a base set of models and then introduce a new model $X_{M+1}$.
We report the average cross-model test accuracy over all directed transfers between the new model and the base models.
We compare \textbf{PW}, which fits the incident pairwise maps; \textbf{GPA-REFIT}, which refits the universe on $M{+}1$ models; and \textbf{GPA-ADD}, which fits only $\Omega_{M+1}$ while keeping the base universe fixed (Appendix~\ref{app:add-model}).

\begin{figure}[htpb!]
    \centering
    \includegraphics[width=\linewidth]{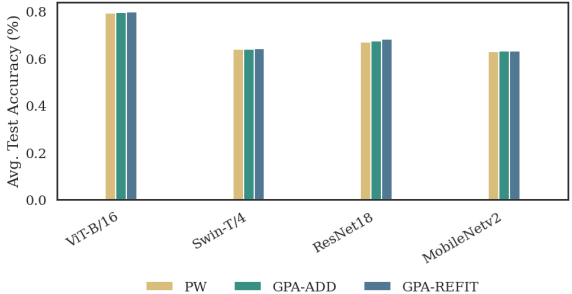}
    \caption{\textbf{Cross-model probing on CIFAR-100.}
    Adding a new model by fitting only $\Omega_{M+1}$ into a fixed universe (GPA-ADD) approaches refitting the universe (GPA-REFIT) and outperforms PW alignment. To cover diverse scenarios, we use four different base model sets where the first two (from the left) sets consist of three models and the next two consist of five models each.}
    \label{fig:addition-bars}
\end{figure}

The results in \cref{fig:addition-bars} demonstrate the efficacy of this approach. Across diverse architectures, including ViT-B/16, Swin-T/4, ResNet18, and MobileNetv2, simply fitting the new model into the fixed universe (\textbf{GPA-ADD}) consistently matches or outperforms the computationally expensive baselines. Specifically, fully refitting the universe (\textbf{GPA-REFIT}) achieves higher average stitching accuracy than independent pairwise alignment (\textbf{PW}) and \textbf{GPA-ADD} as expected and GPA-ADD slightly improves over pairwise alignment. This validates that the shared coordinate system remains robust and can be extended efficiently without the need for global retraining.

\subsection{Aggregation and clustering}
\label{sec:universal-clustering}

Clustering tests a different property than probing or retrieval. Rather than asking whether two models are interchangeable after alignment, it asks whether mapping multiple models into a shared coordinate system yields a space where semantic class structure is easy to recover without supervision. This directly evaluates if multiple pretrained spaces can be aligned into a shared coordinate system consistently across encoders, which can be verified if examples with the same label form tighter groups after alignment.

We evaluate intent clustering on \textsc{MASSIVE}~\cite{fitzgerald2022massive}, which consists of short user queries annotated with one of $60$ intent classes. We embed the same evaluation split with $M{=}10$ pretrained multilingual models (one space per model), so each example provides a matched row across all spaces. We fit alignment on the training split using example identity as correspondence, then map each model's test split embeddings into a shared space and run $k$-means clustering with $k$ set as the number of intents in the test split (for 5 different seeds).

We compare clustering in the original encoder spaces (\textbf{NA}) against clustering after mapping with \textbf{GPA}, \textbf{GCCA}, and our geometry-corrected universe (\textbf{GCPA}). Clustering quality is measured by Adjusted Rand Index (ARI) and Normalized Mutual Information (NMI). We report mean$\pm$std across the 10 models to reflect how consistently each alignment method supports aggregation across different pretrained spaces.

\begin{table}[t]
\centering
\small
\setlength{\tabcolsep}{6pt}
\begin{tabular}{lcc}
\toprule
\textbf{Method} & \textbf{ARI} & \textbf{NMI} \\
\midrule
NA & 0.198$\pm$0.035 & 0.509$\pm$0.041 \\
GPA      & 0.198$\pm$0.036 & 0.509$\pm$0.041 \\
GCCA     & 0.194$\pm$0.032 & 0.508$\pm$0.038 \\
GCPA (ours) & \textbf{0.300$\pm$0.024} & \textbf{0.617$\pm$0.022} \\
\bottomrule \\
\end{tabular}
\caption{\textbf{Clustering on \textsc{MASSIVE}.}
We run k-means separately for each mapped model and report mean$\pm$std across models. GCPA outperforms the other approaches on all 10 models.}
\label{tab:massive_clustering}
\end{table}

Table~\ref{tab:massive_clustering} presents the quantitative results. We observe that standard alignment methods like GPA and GCCA do not improve clustering quality over the original unaligned spaces (NA), suggesting that global rotation or linear projection alone does not enhance the separability of intent classes. In contrast, GCPA achieves a significant improvement in both metrics, boosting the Adjusted Rand Index by over $50\%$ relative to the baselines. This indicates that the post-hoc geometric correction successfully pulls semantically similar items together, creating a shared representation where class structure is more distinct and easier to recover.

\subsection{Retrieval}
\label{sec:gcca_exps}
\begin{figure*}[t]
    \centering
    \includegraphics[width=\linewidth]{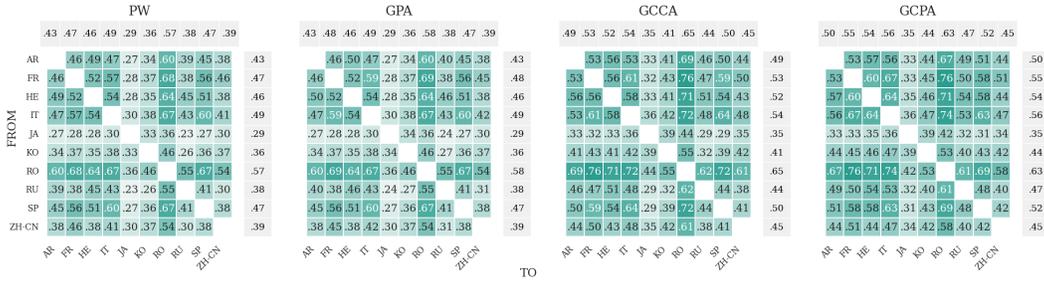}
    \caption{\textbf{Cross-lingual retrieval on \textsc{TED-Multi} (rank-1).} GCPA outperforms GCCA, GPA, and pairwise orthogonal alignment.}
    \label{fig:tedmulti}
\end{figure*}
\subsubsection{Multilingual translation}
\label{sec:tedmulti}
We evaluate cross-lingual sentence retrieval on \textsc{TED-Multi}\footnote{\url{https://huggingface.co/datasets/neulab/ted_multi}}, a multilingual corpus of TED talk transcripts with aligned sentence translations across ten languages.
Each language is embedded with a dedicated pretrained model (Appendix~\ref{app:multilingual-encoders}, Table~\ref{tab:multilingual-encoders}), producing matched-dimensional representations.
Given a query sentence in language $A$, the task is to retrieve its paired translation in language $B$ using cosine similarity; we measure performance as rank-1 retrieval accuracy over the evaluated A$\to$B language pairs.

Direct comparison in the original representation spaces yields near-zero accuracy, indicating that independently trained multilingual models are not natively comparable.
Pairwise orthogonal alignment (PW) substantially improves retrieval, and $M$-way GPA improves further by aligning all languages into a shared universe reference.
Figure~\ref{fig:tedmulti} shows that Geometry-Corrected Procrustes Alignment (GCPA) achieves the best retrieval performance, outperforming both GCCA and purely orthogonal universes (GPA).

Table~\ref{tab:tedmulti_num_spaces} examines how alignment quality scales with the number of languages ($M \in (3, 5, 10)$). For $M{=}3$, we align (EN, FR, ES) and (EN, AR, ZH); for $M{=}5$ we align (EN, FR, ES, IT, RU) and (EN, AR, RU, JA, ZH). While retrieval difficulty naturally increases as more spaces are added, GCPA consistently yields the highest performance across all settings. Crucially, it improves both the average accuracy and the worst-case pair performance relative to GCCA and GPA, demonstrating that the corrected universe remains robust even as the collection of models grows.

\begin{table}[t]
\centering
\small
\setlength{\tabcolsep}{5pt}
\renewcommand{\arraystretch}{1.05}
\resizebox{.5\textwidth}{!}{
\begin{tabular}{lcc c cc c cc}
\toprule
& \multicolumn{2}{c}{$M=3$} & & \multicolumn{2}{c}{$M=5$} & & \multicolumn{2}{c}{$M=10$} \\
\cmidrule(lr){2-3}\cmidrule(lr){5-6}\cmidrule(lr){8-9}
Method & Avg $\uparrow$ & Worst $\uparrow$ & & Avg $\uparrow$ & Worst $\uparrow$ & & Avg $\uparrow$ & Worst $\uparrow$ \\
\midrule
PW   & 0.571 & 0.468 & & 0.471 & 0.303 & & 0.430 & 0.230 \\
GPA  & 0.572 & 0.469 & & 0.474 & 0.308  & & 0.433 & 0.236 \\
GCCA & 0.628 & 0.509 & & 0.531 & 0.366  & & 0.487 & 0.288 \\
GCPA (ours) & \textbf{0.637} & \textbf{0.532} & & \textbf{0.553} & \textbf{0.419} & & \textbf{0.503} & \textbf{0.314} \\
\bottomrule \\
\end{tabular}}
\caption{\textbf{Scaling with the number of languages on \textsc{TED-Multi}.}
Cross-lingual rank-1 retrieval for fixed language subsets at $M{=}3$, $M{=}5$, and $M{=}10$.
We report the average over the evaluated A$\to$B language pairs and the worst-case (minimum) pair performance.}
\label{tab:tedmulti_num_spaces}
\end{table}

\subsubsection{Robustness to Correspondence Noise}
\label{subsec:corrupt-correspondence}
\begin{figure}[t]
    \centering
    \includegraphics[width=0.85\linewidth]{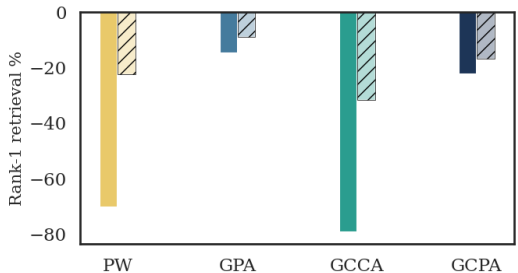}
    \caption{\textbf{Robustness to correspondence noise on \textsc{TED-Multi}.}
Rank-1 retrieval accuracy (\%) on the clean test split relative to the unshuffled baseline. Solid bars average over the evaluated pairs within the triad; hatched bars average over all evaluated pairs that involve at least one shuffled language.
Results are averaged over three disjoint triads.}    \label{fig:correspondence_noise_drop}
\end{figure}
We stress-test alignment under imperfect training correspondences by corrupting the pairing structure used to fit the alignment.
On \textsc{TED-Multi}, we select a language triad and, for each language independently, randomly permute $75\%$ of the training sentences before fitting alignment; evaluation is performed on the unmodified test split.
We consider three disjoint triads (English/French/Spanish; Arabic/Hebrew/Russian; Korean/Japanese/Chinese) and report results averaged over the three trials.

Figure~\ref{fig:correspondence_noise_drop} summarizes the resulting change in cross-lingual rank-1 retrieval accuracy (percentage points) relative to the unshuffled baseline.
The solid bar reports the mean change over evaluated retrieval directions restricted to the triad.
The hatched bar reports a broader aggregate, namely the mean change over all evaluated retrieval directions where at least one side is one of the shuffled languages.
Across both aggregates, GCCA is more sensitive to correspondence noise, while GCPA degrades more gracefully and retains higher retrieval performance under the same corruption level.

Although GPA can show a slightly smaller average drop under this corruption, GCPA attains the highest absolute retrieval accuracy, indicating that the correction remains beneficial even when correspondences are imperfect.

\subsubsection{Person Re-Identification}
\label{sec:reid}
We evaluate cross-camera person re-identification on \textsc{Market-1501}~\cite{zheng2015market} in a zero-shot setting with identity level correspondences. We treat each camera as a separate space and learn an $M$-way alignment across camera-specific representations using training identities only and use the query and gallery sets for retrieval (Appendix~\ref{app:person-reid}). At test time, we perform cross-camera retrieval where queries from camera $C_i$ are matched against galleries from camera $C_j$ ($j\neq i$) using cosine similarity in the aligned coordinates. To reduce camera imbalance, we subsample each camera to match the minimum number of images per identity.

\begin{figure}[htpb!]
    \centering
    \includegraphics[width=\linewidth]{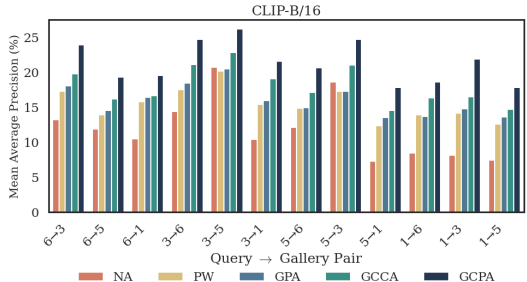}
    \caption{\textbf{Cross-camera retrieval on \textsc{Market-1501} (mAP, \%).} Geometry-Corrected Procrustes Alignment (GCPA) improves over GCCA, GPA, and PW.}
    \label{fig:reid}
\end{figure}

Figure~\ref{fig:reid} presents the retrieval results in terms of Mean Average Precision (mAP). We observe that without alignment (NA), independently trained encoders fail to generalize across views, yielding negligible performance. While standard orthogonal alignments (PW and GPA) successfully restore interoperability, they are consistently outperformed by the agreement-maximizing baseline (GCCA), confirming that strict isometry limits retrieval flexibility. However, our proposed GCPA yields the highest performance across all camera pairs. By correcting the residual directional mismatch within the shared universe, GCPA achieves a distinct margin over GCCA, particularly in transitions such as $6 \to 3$, $3 \to 5$ and $1 \to 3$, demonstrating that combining geometric stability with a consensus-driven correction is crucial for robust cross-view retrieval.

We also conduct an analysis of hyperparameters for cross-camera retrieval and highlight their mild impact on the performance. While we use a fixed GCPA setting across experiments, further performance gains may be obtained by dataset-specific tuning. See Appendix~\ref{app:hyperparams} and~\ref{app:geomchange} for a sensitivity sweep over $(\tau,\lambda)$ and the induced geometry drift and retrieval performance.

\subsubsection{Multimodal alignment}
\label{sec:multimodal}
We study zero-shot cross-modal retrieval on \textsc{Flickr8k}~\cite{harwath2015flickr8k}, which pairs images with text and spoken captions. We construct modality-specific spaces using BERT (text)~\cite{devlin2019bert}, DINOv2 (image)~\cite{caron2021dino}, and HuBERT (audio)~\cite{hsu2021hubert}, alongside CLIP (image)~\cite{radford2021clip} to serve as a strong pivotal anchor. We learn an $M$-way alignment across these modalities, comparing NA, PW, GPA, GCCA, and GCPA. To enforce one-to-one correspondence, we use per-image mean features for text and audio.

\begin{figure}[t]
\centering
\includegraphics[width=\linewidth]{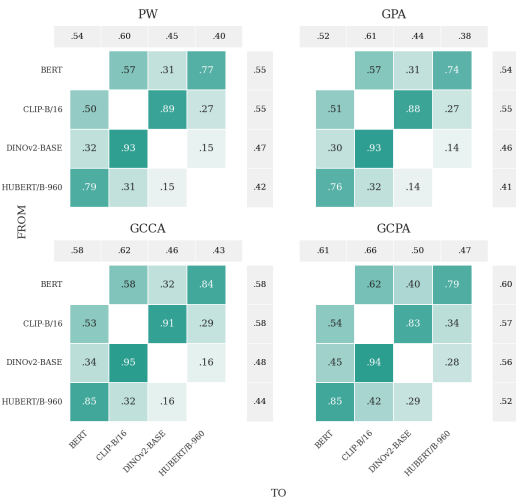}
\caption{\textbf{Rank-1 cross-modal retrieval (\%) on \textsc{Flickr8k}.} GCPA improves audio$\leftrightarrow$ image $\leftrightarrow$text retrieval while retaining the GPA universe.}
\label{fig:semantic-scaffolding-multimodal}
\end{figure}

Figure~\ref{fig:semantic-scaffolding-multimodal} visualizes the pairwise retrieval performance (Rank-1 accuracy) as heatmaps. We observe that the audio modality (HuBERT) presents the most significant alignment challenge, showing weak connectivity to visual models in the standard pairwise (PW) and GPA baselines. For instance, HuBERT retrieval of DINOv2 targets is only $14-15\%$. While GCCA improves some connections, it struggles to bridge the gap between audio and pure vision encoders. In contrast, GCPA yields substantial improvements in these hard settings: HuBERT$\to$DINOv2 retrieval nearly doubles to $29\%$, and HuBERT$\to$CLIP improves from $32\%$ (GPA) to $42\%$ (GCPA). Overall, GCPA achieves the highest column averages across all modalities, demonstrating that the geometric correction effectively pulls audio representations into alignment with the vision-text backbone.

To understand the geometric mechanism behind these gains, we analyze how alignment changes (i) the typical distance between matched cross-modal pairs and (ii) the strength of high-agreement matched triplets. We consider images (I), text (T), and audio (A) modalities, and evaluate all three cross-modal pairs ($I\leftrightarrow T, I \leftrightarrow A, T \leftrightarrow A$).

We use cosine distance on row-wise normalized vectors. Let $(z_i^{I}, z_i^{T}, z_i^{A})$ denote the aligned embeddings for sample $i$ in the three modalities after normalization, and define $d(u,v)=1-\langle u,v\rangle$. For each sample $i$, we define a matched cross-modal distance by averaging the three within-triplet distances:
\begin{align*}
    d_i^{+} &= \tfrac{1}{3}\Big(d(z_i^{I}, z_i^{T}) + d(z_i^{I}, z_i^{A}) + d(z_i^{T}, z_i^{A})\Big),\\
    \Delta^{+} &= \tfrac{1}{N}\sum_{i=1}^{N} d_i^{+}
\end{align*}
where lower $\Delta^{+}$ indicates that matched items are closer on average across modalities.

To summarize three-way agreement, we measure the magnitude of the mean direction of each matched triplet:
\[
\gamma_i \;=\; \left\|\tfrac{1}{3}\big(z_i^{I} + z_i^{T} + z_i^{A}\big)\right\|_2.
\]
We then report a high-agreement summary $\Gamma_{90}$ defined as the value of $\gamma_i$ at the $90$th percentile over the dataset, i.e., the agreement level attained by the $10\%$ most coherent matched triplets. Larger $\Gamma_{90}$ indicates that a substantial fraction of matched triplets achieve strong cross-modal coherence.

Table~\ref{tab:triplet_margin_allinone} reports $\Delta^{+}$ together with $\Gamma_{90}$ across three encoder triplets. The results confirm that GCPA consistently reduces the mean cross-modal distance $\Delta^{+}$ compared to GPA ($0.729 \to 0.560$) and GCCA ($0.744 \to 0.560$), indicating a much tighter clustering of matched modalities. Furthermore, GCPA significantly increases $\Gamma_{90}$ ($0.760 \to 0.853$), showing that the correction successfully concentrates the multimodal universe, creating a high-coherence core that supports improved retrieval performance.

\begin{table}[t]
\centering
\small
\setlength{\tabcolsep}{3pt}
\renewcommand{\arraystretch}{1.05}
\resizebox{.5\textwidth}{!}{
\begin{tabular}{lcccc}
\toprule
& \multicolumn{1}{c}{1} & \multicolumn{1}{c}{2} & \multicolumn{1}{c}{3} & \multicolumn{1}{c}{Mean} \\
\cmidrule(lr){2-2}\cmidrule(lr){3-3}\cmidrule(lr){4-4}\cmidrule(lr){5-5}
Method &
$\Delta^+\downarrow$ \;(\,$\Gamma_{90}\uparrow$\,) &
$\Delta^+\downarrow$ \;(\,$\Gamma_{90}\uparrow$\,) &
$\Delta^+\downarrow$ \;(\,$\Gamma_{90}\uparrow$\,) &
$\Delta^+\downarrow$ \;(\,$\Gamma_{90}\uparrow$\,) \\
\midrule
GPA  & 0.776 (0.732) & 0.689 (0.787) & 0.721 (0.761) & 0.729 (0.760) \\

GCCA & 0.779 (0.728) & 0.730 (0.758) & 0.724 (0.758) & 0.744 (0.748) \\

GCPA (ours) & \textbf{0.623} (\textbf{0.824}) & \textbf{0.503} (\textbf{0.880}) & \textbf{0.554} (\textbf{0.854}) &
\textbf{0.560} (\textbf{0.853}) \\
\bottomrule \\
\end{tabular}}
\caption{\textbf{Agreement analysis (Universe) for multimodal retrieval.}
We report the mean matched cross-modal distance $\Delta^{+}$ (lower is better) together with the upper-tail matched triplet agreement $\Gamma_{90}$ (higher is better), across three encoder triplets.}
\label{tab:triplet_margin_allinone}
\end{table}

\subsection{Robustness and additional analyses}
\label{sec:robustness-extensibility}

The previous sections evaluate GCPA on probing, clustering, and retrieval experiments. We now provide additional analyses of how the shared universe behaves under stochastic fitting, noisy supervision, larger multimodal settings, sequential addition, and changes in the set of aligned spaces.

We first study the reliability of the correction. Repeating the GCPA alignment with different random seeds shows that GCPA remains the best-performing method on \textsc{TED-Multi}, \textsc{Market-1501}, and \textsc{Flickr8k}, with only mild stochasticity in retrieval performance (Appendix~\ref{app:retrieval-sensitivity}). We then
corrupt training correspondences on Flickr8k and evaluate on the clean test split. GCPA remains strongest across different levels of correspondence noise, while all methods degrade as the supervision becomes less reliable (Appendix~\ref{app:flickr8k_noise}). The weak-consensus experiment in Appendix~\ref{app:nonisomorphic} studies the impact of GPA and GCPA alignment when the aligned spaces are too weakly related. In this case, the consensus used by GCPA can itself become unreliable, and GPA becomes the more conservative reference. We also perturb sample embeddings at inference time to test multimodal retrieval and find that GCPA outperforms GPA across varied levels of perturbations (Appendix~\ref{app:ood-sensitivity}).

We next study scaling and extension. Larger multimodal retrieval experiments on \textsc{Flickr30k}~\cite{young2014image} preserve the same qualitative trends as the main multimodal experiments (Appendix~\ref{app:largescaleretrieval}). In continual-addition
experiments, the universe is expanded by adding new spaces sequentially. The GPA universe remains close to full refitting, while GCPA benefits from updating only the correction module after new spaces are added (Appendix~\ref{app:continual-addition}).

We also study how the choice of spaces affects the corrected universe. In this analysis, we fix a base universe and insert candidate spaces one at a time, measuring how the original base models change after each insertion (Appendix~\ref{app:model-selection-stability}). The results show that model capacity alone does not determine whether a candidate is useful. Clean candidates usually have higher similarity to the existing consensus and stronger cross-model probing performance, while corrupted candidates have lower consensus similarity and give smaller gains or slight degradation on the original base models. Thus, candidate selection should consider agreement with the existing consensus, the effect on the original base models, and the candidate's cross-model utility.

Finally, we analyze what GCPA changes after the GPA universe is built. Compared with GPA, the corrected embeddings have lower stable rank, higher effective condition number, and a different leading principal subspace (Appendix~\ref{app:structural_analysis}). In multimodal ablations, the gains cannot be explained solely by the presence of a multimodal anchor such as CLIP (Appendix~\ref{app:multimodal-ablation}). We also compare PCA-based dimension matching with zero-padding for heterogeneous retrieval features in Appendix~\ref{app:multimodal-ablation}.

Together, these analyses clarify where each method is most appropriate. In our experiments, GCPA gives the strongest performance when the aligned spaces provide enough reliable correspondence signal for the consensus correction. When this consensus is unreliable, GPA is the more stable reference. GCCA remains a strong option for retrieval when an explicit reusable universe is not needed.

\section{Conclusions}\label{conclusions}
In this work, we addressed the challenge of aligning multiple representation spaces, demonstrating that standard pairwise strategies fail to scale and lack the consistency required for the $M$-model regime. To overcome these limitations, we proposed a framework that aligns all models into a single shared coordinate system. This formulation reduces alignment complexity from quadratic to linear, facilitates the efficient addition of new models, and enforces cycle consistency by design. Our analysis revealed a critical trade-off in constructing this universe: while Generalized Procrustes Analysis (GPA) yields an orthogonal reference that preserves the internal geometry necessary for model stitching, it is outperformed in zero-shot retrieval by agreement-maximizing methods like GCCA. We resolved this tension with \textbf{Geometry-Corrected Procrustes Alignment (GCPA)}. By using the GPA universe as a robust geometric scaffold and applying a consensus-driven correction, GCPA bridges the gap between geometric fidelity and semantic agreement. Our experiments across multilingual, cross-camera, and multimodal benchmarks confirm that GCPA effectively synthesizes the best of both worlds, achieving state-of-the-art retrieval performance while retaining a practical, composable reference space.

\paragraph{Future Directions.}
A natural extension of this work is to explore more expressive alignment regimes to construct the shared universe. While this study focused on linear and near-linear alignment, future work could investigate $M$-way functional alignment to capture non-isometric correspondences inherent in highly heterogeneous model collections. Furthermore, the ability of the universe to ``heal'' weak links between disparate models may indicate applications in decentralized learning and model merging, where maintaining a coherent global state from fragmented views is critical.

\section*{Impact Statement}
This paper presents work whose goal is to advance the field of machine learning. Shared-universe alignment may support interoperability across independently trained systems, but it may also propagate biases or other undesirable structure across aligned models. We discuss limitations and broader impact in Appendix~\ref{app:limitations}.

% \subsubsection*{Author Contributions}
\section*{Acknowledgments}
This work has been supported by MUR FIS2 grant n. FIS-2023-00942 ``NEXUS'' (cup B53C25001030001) and by Sapienza University of Rome via the Seed of ERC grant ``MINT.AI'' (cup B83C25001040001). The project was ideated as a part of the London Geometry and Machine Learning Summer School (LOGML) 2025. We are grateful to the organizers for hosting the summer school. Akshit Achara was supported by the UK Engineering and Physical Sciences Research Council (EPSRC) [Grant reference number EP/Y035216/1] Centre for Doctoral Training in Data-Driven Health (DRIVE-Health) at King’s College London. Matéo Mahaut received funding from the European Research Council (ERC) under the European Union’s Horizon 2020 research and innovation programme (grant agreement No. 101019291), and from the Catalan government (AGAUR grant SGR 2021 00470). This paper reflects the authors’ view only, and the ERC is not responsible for any use that may be made of the information it contains.

\bibliography{icml2026}
\bibliographystyle{icml2026}

\appendix

\section{Additional details}
\subsection{Multi-Way Alignment Strengthens Weak Pairwise Links} \label{app:weak_links}
To simulate a fragile pair (Section~\ref{sec:weak-links}), we utilize a ResNet-18~\cite{he2016resnet} and a ViT-T/16~\cite{dosovitskiy2020vit} trained on a corrupted version of CIFAR-100~\cite{krizhevsky2009cifar100}. We introduce a distribution shift by replacing $85\%$ of the training images for these two models with binary Canny edge maps~\cite{canny2009computational}, retaining only $15\%$ as original RGB. This degrades the correlation between their learned features compared to models trained on standard data. The anchor models added to the universe are trained on the standard uncorrupted RGB training set. We use a 45{,}000/5{,}000 stratified train/validation split for alignment fitting. Crucially, the corruption is applied only to the training samples used to learn the alignment. The stitching accuracy is reported on the standard, uncorrupted CIFAR-100 test set to evaluate generalization. We also repeat this experiment for GCPA and
Figure~\ref{fig:expanding-set-gcpa} shows that GCPA can underperform when correspondences are weak.
GCPA nudges universe directions toward a per-sample multi-model consensus; if this consensus is distorted by imperfect correspondences (e.g., natural image versus edge-map), the correction can reinforce the mismatch.
As additional models are added, the consensus is better supported and GCPA recovers, surpasses GPA, and continues improving as more anchors are introduced.
This behavior is consistent with the intended role of the correction, which is most beneficial when multi-way agreement reflects shared semantics rather than corruption-specific variation.

Since the GCPA geometry correction is directional, Fig.~\ref{fig:expanding-set-gcpa} reports the unit-direction form to isolate the effect of directional correction in the shared universe coordinates. For scale-sensitive linear probing, we additionally find that rescaling corrected universe coordinates to match the pre-correction GPA norm yields further consistent improvements. Even in this unit-direction form, GCPA can match or exceed the pairwise probing accuracy of the same architecture pair trained on uncorrupted RGB data, and rescaling improves further.

\begin{figure}[t]
    \centering
    \includegraphics[width=0.82\linewidth]{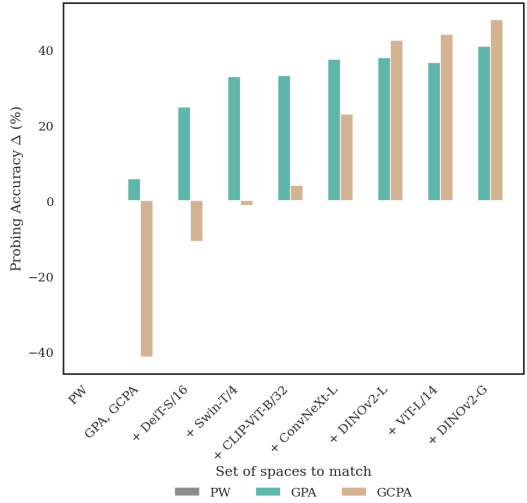}
    \caption{\textbf{Weak-link probing on edge-heavy CIFAR-100 under an expanding alignment set.} We plot the change in probing accuracy (in \%) for the same fragile pair as additional models are added. GPA improves steadily with more anchors, while GCPA can be unstable for very small sets but surpasses GPA once the universe is supported by sufficiently strong models.}
    \label{fig:expanding-set-gcpa}
\end{figure}

\paragraph{When the consensus is unreliable.}
\label{app:nonisomorphic}

The weak-link experiment above shows that GCPA improves once the universe is supported by sufficiently reliable anchors. We now test a more difficult setting in which the universe contains only a fragile pair, ResNet-50 and ViT-B/16, without additional clean anchors. We increase the amount of Canny corruption used to fit the two spaces and evaluate probing accuracy between the two models. This shows that when the aligned spaces are too weakly related, the per-sample consensus used by GCPA can itself become unreliable.

\begin{table}[htpb!]
\centering
\resizebox{.3\textwidth}{!}{
\begin{tabular}{lrrr}
\toprule
Corruption & PW & GPA & GCPA \\
\midrule
50\%  & 0.699 & \textbf{0.719} & 0.643 \\
75\%  & 0.630 & \textbf{0.670} & 0.568 \\
100\% & \textbf{0.036} & 0.035 & 0.015 \\
\bottomrule
\end{tabular}}
\caption{\textbf{Fragile-pair probing accuracy under increasing fraction of Canny images on CIFAR-100.} Values are mean test accuracy over both translation directions.}
\label{tab:nonisomorphic_check}
\vspace{-0.25cm}
\end{table}

Table~\ref{tab:nonisomorphic_check} shows that GPA is more reliable in this setting. At $50\%$ and $75\%$ corruption, GPA remains stronger than pairwise alignment, while GCPA underperforms because the correction is pulled toward a weak two-model consensus. At $100\%$ corruption, all methods collapse close to chance.

Since GCPA acts primarily on directions in the universe, we also tested a rescaled variant that restores the pre-correction GPA norm after applying the directional correction. This improves performance at moderate corruption, but does not address the underlying issue when the consensus itself is unreliable. Thus, in settings with weak consensus support, GPA is the more conservative reference, while GCPA is most useful when the universe has enough clean support for its consensus correction to be meaningful.

\subsection{Incremental addition}
\label{app:add-model}

For a base set of $M$ models and an added model $X_{M+1}$, we report the mean cross-model test accuracy over directed transfers involving the added model:

\[
\begin{aligned}
\mathrm{AvgNew}
&=\frac{1}{2M}\sum_{m=1}^M \Big(
\mathrm{Acc}(X_{M+1}\!\rightarrow X_m) \\
&\hspace{2.2cm}+
\mathrm{Acc}(X_m\!\rightarrow X_{M+1})
\Big).
\end{aligned}
\]

\textbf{PW} fits $\Omega_{m\leftarrow (M+1)}$ and $\Omega_{(M+1)\leftarrow m}$ independently for each $m$.
\textbf{GPA-REFIT} fits $\{\Omega_m\}_{m=1}^{M+1}$ jointly on $\{X_m\}_{m=1}^{M+1}$.
\textbf{GPA-ADD} keeps $\{\Omega_m\}_{m=1}^M$ fixed and learns only $\Omega_{M+1}$, inducing pairwise maps by $\Omega_{m\leftarrow (M+1)}=\Omega_m\Omega_{M+1}^\top$ and $\Omega_{(M+1)\leftarrow m}=\Omega_{M+1}\Omega_m^\top$.
We list all model sets used in \cref{tab:addition-configs}.

\begin{table}[t]
\centering
\small
\setlength{\tabcolsep}{6pt}
\begin{tabular}{lp{0.72\linewidth}}
\toprule
\textbf{Added model} & \textbf{Base model set} \\
\midrule
ViT-B/16
& DINOv3 ViT-L/16, ConvNeXt-L, ResNet-50 \\

Swin-T/4
& DINOv2-B, CLIP ViT-B/32, ViT-B/16 \\

ResNet-18
& ConvNeXt-L, DINOv2-L, CLIP ViT-B/32, ViT-B/16, Swin-L/12 \\

MobileNetV2
& ConvNeXt-B, DINOv3 ViT-L/16, ViT-B/16, CLIP ViT-B/16, ResNet-34 \\
\bottomrule \\

\end{tabular}
\caption{\textbf{Incremental addition configurations.} Each row defines a base model set used to learn a universe, and an added model integrated into that universe.}
\label{tab:addition-configs}
\end{table}

\paragraph{Continual addition.}
\label{app:continual-addition}

We evaluate whether the universe remains stable when models are added
sequentially. We use a class-stratified subset of $2500$ CIFAR-100 training samples. Starting from a universe with $10$ base models, we add $10$ additional models one at a time. The base set contains ResNet-18, ResNet-34, ResNet-50, ViT-T, DeiT-T~\cite{touvron2021training}, DeiT-S, ViT-B/16-224, Swin-T~\cite{liu2021swin}, MobileNetV2, and ConvNeXt-T~\cite{liu2022convnet}. The added models are ConvNeXt-B, DINOv2-S, DINOv2-B, CLIP-ViT-B/16, CLIP-ViT-B/32, ViT-B/16-384, ViT-L/16-224, ConvNeXt-L, DINOv2-L, and Swin-L. After adding $1$, $3$, $6$, $9$, and $10$ models, we compare the incremental universe with a full refit on the same updated model set.

We compare three incremental strategies. \textbf{GPA-ADD} keeps all existing GPA maps fixed and learns only the orthogonal map from the added model into the current mean universe. \textbf{GCPA-ADD} uses the same add-only map update and keeps the existing GCPA correction module fixed. \textbf{GCPA-ADD-LIGHT} also uses the same add-only update as GCPA-ADD, and then re-optimizes the GCPA correction module on the enlarged set of universe representations. Thus, it leaves the existing GPA maps fixed while updating only the inserted-space alignment and the correction stage.

\begin{table}[htpb!]
\centering
\resizebox{.45\textwidth}{!}{
\begin{tabular}{lcc}
\toprule
\textbf{Incremental strategy} &
\textbf{Mean diff. from refitting} &
\textbf{Final accuracy} \\
\midrule
GPA-ADD & $-0.002$ & $0.626$ \\
GCPA-ADD & $-0.021$ & $0.624$ \\
GCPA-ADD-LIGHT & $-0.009$ & \textbf{$0.643$} \\
\bottomrule
\end{tabular}}
\caption{\textbf{Stability under continual addition on CIFAR-100
probing.} Starting from $10$ models, we add $10$ more models sequentially and compare incremental addition against full refitting after $1$, $3$, $6$, $9$, and $10$ additions. Final accuracy is measured over all cross-model pairs after the universe contains $20$ models.}
\label{tab:continual_addition}
\vspace{-0.25cm}
\end{table}

The test set evaluation in Table~\ref{tab:continual_addition} shows that GPA-ADD stays very close to GPA-REFIT, with a mean difference of $-0.002$ over the evaluated addition steps at 1, 3, 6, 9, and 10 added models. This indicates that the orthogonal reference remains stable when older maps are kept fixed. GCPA-ADD has a larger gap to GCPA-REFIT, suggesting that the fixed correction module becomes less well matched as new spaces are added. Re-optimizing only the correction module reduces this gap from $-0.021$ to $-0.009$ and gives the highest final accuracy after all $20$ models are included. Thus, sequential extension does not require refitting the full universe, although GCPA benefits from occasional updates.

\subsection{Person Re-Identification}\label{app:person-reid}
\textsc{Market-1501} provides images from six cameras. We select four cameras with the largest identity overlap, yielding $437$ training identities and $465$ identities for both query and gallery. Query and gallery identities coincide and are disjoint from train. Each selected camera defines one space ($C_1,C_2,C_3,C_4$). We learn the multi-way alignment using per-image embeddings computed on the training split. For evaluation, we subsample each camera to match the per-identity minimum number of images in query and gallery, and assess all cross-camera pairs $C_i \mapsto C_j$ with $i\neq j$. Note that this dataset does not provide cross‑camera image‑level correspondences and therefore, our evaluation aligns person identities with no assumption of per‑image pairing across cameras.

\subsection{Multilingual encoder set}
\label{app:multilingual-encoders}

We use one language-specific Transformer encoder per language.
This encoder set is used across our multilingual experiments, including the multilingual retrieval (Section~\ref{sec:tedmulti}), correspondence-noise stress test (Section~\ref{subsec:corrupt-correspondence}) and the massive multilingual clustering experiment (Section~\ref{sec:universal-clustering}).

\begin{table}[t]
\centering
\small
\setlength{\tabcolsep}{4pt}
\begin{tabular}{l l}
\toprule
Language & Encoder \\
\midrule
English (EN)   & RoBERTa-base \\
French (FR)    & CamemBERT-base \\
Spanish (ES)   & Spanish BERT-base \\
Italian (IT)   & Italian BERT-base \\
Arabic (AR)    & AraBERTv2 (base) \\
Hebrew (HE)    & HeBERT (base) \\
Russian (RU)   & RuBERT (base) \\
Japanese (JA)  & Japanese BERT-base \\
Korean (KO)    & Korean BERT-base \\
Chinese (ZH)   & Simplified Chinese BERT-base \\
\bottomrule
\end{tabular}
\caption{\textbf{Language-specific encoder set used in our multilingual experiments.} Each row contains the language and the corresponding encoder used in our multilingual experiments.}
\label{tab:multilingual-encoders}
\vspace{-4mm}
\end{table}

\subsection{Cross-modal protocol details}\label{app:cross-modal}
Flickr8k contains 8{,}000 images, each paired with five text and five spoken captions. We construct modality-specific spaces (image, text, audio) and form per-image mean representations for text and audio to obtain one-to-one correspondence with image features. We learn an $M$-way alignment with one space per modality and evaluate zero-shot cross-modal retrieval by querying from one modality and retrieving in another (e.g., image$\to$audio, audio$\to$image).

\paragraph{Multimodal retrieval at a larger scale.}
\label{app:largescaleretrieval}

We also evaluate multimodal retrieval on \textsc{Flickr30k}. We use $6000$ image--caption pairs for training and $1000$ pairs for testing, selecting one caption per image. We evaluate cross-encoder retrieval.

We construct a universe with five image (DINOv2-B, DINOv2-L, CLIP-B/32, CLIP-L/14, ConvNeXt-L) and five text spaces (CLIP-B/32, CLIP-L/14, MiniLM~\cite{wang2020minilm}, MPNet~\cite{song2020mpnet}, and BGE-small~\cite{bge_embedding}). We evaluate two training conditions. In the clean setting, the original training correspondences are used. In the noisy setting, we corrupt $25\%$ of the training correspondences for the text spaces by replacing each selected correspondence with its nearest neighbor in the CLIP-L/14 vision embedding space. Evaluation is performed on the same clean test set in both cases. Here, we use a lighter GCPA correction setting to make the correction more conservative.

Table~\ref{tab:largescaleretrieval} shows that GCPA performs best across all reported metrics in both clean and noisy settings. The noisy setting reduces performance for all methods, but GCPA retains the highest average retrieval and worst-pair retrieval performance.

\begin{table}[htpb!]
\centering
\resizebox{.4\textwidth}{!}{
\begin{tabular}{llrrrr}
\toprule
Condition & Method & Mean Cos. & R@1 & mAP & Worst R@1 \\
\midrule
\multirow{4}{*}{Clean}
& PW   & 0.257 & 0.602 & 0.686 & 0.164 \\
& GPA  & 0.287 & 0.601 & 0.688 & 0.178 \\
& GCCA & 0.305 & 0.626 & 0.709 & 0.189 \\
& GCPA & \textbf{0.486} & \textbf{0.640} & \textbf{0.733} & \textbf{0.270} \\
\midrule
\multirow{4}{*}{Noisy}
& PW   & 0.250 & 0.585 & 0.668 & 0.148 \\
& GPA  & 0.280 & 0.588 & 0.674 & 0.161 \\
& GCCA & 0.298 & 0.607 & 0.689 & 0.168 \\
& GCPA & \textbf{0.465} & \textbf{0.613} & \textbf{0.710} & \textbf{0.239} \\
\bottomrule
\end{tabular}}
\caption{\textbf{Multimodal retrieval under clean and noisy correspondences on \textsc{Flickr30k}.} Values are mean cosine similarity, R@1, mAP, and worst-pair R@1; higher is better.}
\label{tab:largescaleretrieval}
\vspace{-0.25cm}
\end{table}

\subsection{Computational complexity and costs}
\label{app:computational-complexity}

As GCPA adds a correction step on top of the GPA universe fit and the per-sample consensus direction is formed by averaging the aligned directions across the $M$ spaces, this costs $\mathcal{O}(Md)$ per sample, or $\mathcal{O}(BMd)$ for a batch of size $B$. Across $N$ samples, the consensus can be formed in $\mathcal{O}(NMd)$ time and stored in $\mathcal{O}(Nd)$ memory.

We measure runtime and peak memory on synthetic matched representations with $d=256$ on a single NVIDIA RTX A6000 system. We fit each method three times and report the average runtime and peak memory. In the view-scaling experiment with $N=10{,}000$ fixed, the full GCPA fit uses about $5\times$, $3\times$, and $2\times$ the runtime of GPA at $M=10$, $20$, and $40$, respectively, and about $1.5\times$ to $1.7\times$ the memory of GPA. GCCA is faster than GPA and GCPA for smaller numbers of spaces, but uses substantially more memory. Across the same experiment, GCCA uses about $4\times$ to $6\times$ the memory of GPA and about $2\times$ to $4\times$ the memory of GCPA.

When we vary the number of samples from $N=1000$ to $N=20{,}000$ with $M=10$ fixed, memory usage increases as expected. For incremental extension, adding one new space to a fixed universe is substantially cheaper than full refitting after insertion, taking about $5\%$ of the refitting time in our fixed GCPA training setup.

\subsection{Retrieval variability across GCPA fits}
\label{app:retrieval-sensitivity}

GCPA includes a stochastic correction step, so we repeat the main retrieval experiments with $10$ random seeds for GCPA. We evaluate the
same settings as in the main text: \textsc{TED-Multi} (Section~\ref{sec:tedmulti}), \textsc{Market-1501}
(Section~\ref{sec:reid}), and \textsc{Flickr8k}
(Section~\ref{sec:multimodal}). We report the mean and $95\%$ confidence interval for GCPA. GCCA is deterministic in our setup, so we report its point estimate.

\begin{table}[htpb!]
\centering
\resizebox{.45\textwidth}{!}{
\begin{tabular}{lccc}
\toprule
\textbf{Benchmark} & \textbf{Metric} & \textbf{GCPA mean [95\% CI]} & \textbf{GCCA} \\
\midrule
\textsc{TED-Multi}    & $R@1$ & 0.502 $[0.500, 0.505]$ & 0.487 \\
\textsc{Market-1501}  & $R@1$ & 0.178 $[0.171, 0.185]$ & 0.154 \\
\textsc{Market-1501}  & mAP   & 0.199 $[0.191, 0.207]$ & 0.180 \\
\textsc{Flickr8k}     & $R@1$ & 0.550 $[0.545, 0.554]$ & 0.521 \\
\bottomrule
\end{tabular}
}
\caption{\textbf{Retrieval performance across repeated GCPA alignment.} Values are $R@1$ and mean average precision where higher indicates better retrieval performance.}
\label{tab:retrieval_ci}
\vspace{-0.25cm}
\end{table}

Table~\ref{tab:retrieval_ci} shows that the stochasticity of the GCPA correction does not change the qualitative ranking and GCPA remains above GCCA on all three retrieval experiments.

\section{Theorem proofs}
\label{app:proofs}

Here, we present some theoretical properties of GCCA which are useful for retrieval and why consensus based correction is an effective way to correct directional mismatch.

\begin{theorem}[Optimal multi-space alignment under squared cross-space discrepancy]
Let $S \in \mathbb{R}^{(\sum_i r_i) \times (\sum_i r_i)}$ be the block matrix with diagonal blocks
$S_{ii} = (M-1)I_{r_i}$ and off-diagonal blocks $S_{ij} = -U_i^\top U_j$.
Then the global minimizer of Eq.~\ref{eq:gcca_objective} is given by the eigenvectors corresponding to the $R$ smallest eigenvalues of $S$, stacked into $\Phi^\star = [\Phi_1^\star; \dots; \Phi_M^\star]$.
Moreover, the objective in Eq.~\ref{eq:gcca_objective} admits the equivalent trace form:
\[
\sum_{i<j} \|U_i \Phi_i - U_j \Phi_j\|_F^2
=
\mathrm{Tr}(\Phi^\top S \Phi),
\]
where $\Phi = [\Phi_1; \dots; \Phi_M]$.
\label{thm:gcca_solution}
\end{theorem}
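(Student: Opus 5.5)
The plan is to reduce Eq.~\ref{eq:gcca_objective} to a trace minimization over matrices with orthonormal columns and then invoke the Ky Fan (Rayleigh--Ritz) variational principle. I will read the statement with the views written as $U_i \in \mathbb{R}^{N\times r_i}$. These are the whitened (orthonormalized) representations of the $X_i$, so that $U_i^\top U_i = I_{r_i}$; this is the standard GCCA preprocessing. I will also take $\Phi_i \in \mathbb{R}^{r_i\times R}$. The orthonormality assumption is what makes the diagonal blocks of $S$ equal to $(M-1)I_{r_i}$, so I will state it explicitly at the start. Without it the diagonal blocks would be $(M-1)U_i^\top U_i$ and the same argument would go through with that $S$.

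\textbf{Step 1 (trace form).} Expand each pairwise term:
\[
\|U_i\Phi_i - U_j\Phi_j\|_F^2 = \mathrm{Tr}(\Phi_i^\top U_i^\top U_i\Phi_i) + \mathrm{Tr}(\Phi_j^\top U_j^\top U_j\Phi_j) - 2\,\mathrm{Tr}(\Phi_i^\top U_i^\top U_j\Phi_j).
\]
Using $U_i^\top U_i = I$, the quadratic terms become $\mathrm{Tr}(\Phi_i^\top\Phi_i)$.

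\emph{Quadratic terms.} When summing over $i<j$, each index $i$ lies in exactly $M-1$ pairs. So these terms sum to $\sum_i (M-1)\mathrm{Tr}(\Phi_i^\top\Phi_i)$, which is the contribution of the diagonal blocks $S_{ii}$.

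\emph{Cross terms.} Since $\mathrm{Tr}(A)=\mathrm{Tr}(A^\top)$, we have $\mathrm{Tr}(\Phi_i^\top U_i^\top U_j\Phi_j) = \mathrm{Tr}(\Phi_j^\top U_j^\top U_i\Phi_i)$. Hence $-2\sum_{i<j}$ of these equals $-\sum_{i\neq j}\mathrm{Tr}(\Phi_i^\top U_i^\top U_j\Phi_j)$, which is the contribution of the off-diagonal blocks $S_{ij}=-U_i^\top U_j$.

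Together these give $\mathrm{Tr}(\Phi^\top S\Phi)$ with $\Phi=[\Phi_1;\dots;\Phi_M]$. I also note that $S$ is symmetric, because $S_{ji}=S_{ij}^\top$.

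\textbf{Step 2 (constraint and spectral solution).} The constraint $\sum_m \Phi_m^\top\Phi_m = I$ is exactly $\Phi^\top\Phi = I_R$. So the problem becomes minimizing $\mathrm{Tr}(\Phi^\top S\Phi)$ over the Stiefel manifold of $(\sum_i r_i)\times R$ matrices with orthonormal columns. Ky Fan's minimum principle gives $\min_{\Phi^\top\Phi=I_R}\mathrm{Tr}(\Phi^\top S\Phi)=\sum_{k=1}^R\lambda_k(S)$, the sum of the $R$ smallest eigenvalues of $S$. The minimum is attained by stacking the corresponding eigenvectors. If I want the argument self-contained, I will prove this by diagonalizing $S=Q\Lambda Q^\top$ and setting $P=Q^\top\Phi$. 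Then $\mathrm{Tr}(\Phi^\top S\Phi)=\sum_k \lambda_k w_k$ with $w_k=\|P_{k,:}\|^2\in[0,1]$ and $\sum_k w_k=R$, and this linear function of the weights $w_k$ is minimized by putting unit weight on the $R$ smallest eigenvalues.

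\textbf{Step 3 (global optimality and uniqueness caveat).} The minimizer is global, not merely stationary, because the Ky Fan bound is a global inequality. The optimum is unique only up to right-multiplication by an $R\times R$ orthogonal matrix, and only when $\lambda_R<\lambda_{R+1}$. I will remark that "the eigenvectors" should be read modulo this invariance, since the objective is invariant under $\Phi\mapsto\Phi Q$.

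The main obstacle is bookkeeping rather than depth. First, the factor of $2$ in Step 1 must correctly turn the sum over $i<j$ into the symmetric off-diagonal blocks. Second, the assumption $U_i^\top U_i=I$ must be made explicit, because the statement silently writes $U_i$ where Eq.~\ref{eq:gcca_objective} has $X_i$.
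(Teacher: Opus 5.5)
Your proposal is correct and follows the paper's proof essentially step for step. You assume $U_i^\top U_i = I_{r_i}$, expand each pairwise term so the quadratic parts contribute $(M-1)\|\Phi_i\|_F^2$ and the cross terms give the off-diagonal blocks $-U_i^\top U_j$, recognize the constraint as $\Phi^\top\Phi = I_R$, and conclude by Ky Fan/Rayleigh--Ritz, exactly as the paper does; your self-contained Ky Fan argument via weights $w_k \in [0,1]$ with $\sum_k w_k = R$ and your remark that the minimizer is unique only up to $\Phi \mapsto \Phi Q$ when $\lambda_R < \lambda_{R+1}$ are valid refinements the paper omits.
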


\begin{proof}
Recall the objective in Eq.~\ref{eq:gcca_objective}:
\[
\min_{\{\Phi_i\}}
\sum_{1 \le i < j \le M} \|U_i\Phi_i - U_j\Phi_j\|_F^2
\quad \text{s.t.}\quad
\sum_{i=1}^M \Phi_i^\top \Phi_i = I_R,
\]
where each $U_i \in \mathbb{R}^{N\times r_i}$ has orthonormal columns ($U_i^\top U_i = I_{r_i}$).
Define the stacked variable
\[
\Phi := 
\begin{bmatrix}
\Phi_1\\ \vdots\\ \Phi_M
\end{bmatrix}
\in \mathbb{R}^{(\sum_i r_i)\times R}.
\]

First, we prove that the objective can be rewritten as a trace quadratic form.
For any $i<j$,
\begin{align*}
\|U_i\Phi_i - U_j\Phi_j\|_F^2
&= \mathrm{Tr}\!\left((U_i\Phi_i - U_j\Phi_j)^\top (U_i\Phi_i - U_j\Phi_j)\right) \\
&= \mathrm{Tr}(\Phi_i^\top U_i^\top U_i \Phi_i)
+ \mathrm{Tr}(\Phi_j^\top U_j^\top U_j \Phi_j) \\
&- 2\,\mathrm{Tr}(\Phi_i^\top U_i^\top U_j \Phi_j) \\
&= \|\Phi_i\|_F^2 + \|\Phi_j\|_F^2
- 2\,\mathrm{Tr}(\Phi_i^\top U_i^\top U_j \Phi_j),
\end{align*}
using $U_i^\top U_i = I_{r_i}$ and $U_j^\top U_j = I_{r_j}$.

Summing over all pairs $1\le i<j\le M$, each term $\|\Phi_i\|_F^2$ appears exactly $(M-1)$ times, hence
\begin{align*}
\sum_{i<j}\|U_i\Phi_i - U_j\Phi_j\|_F^2
&= \sum_{i=1}^M (M-1)\|\Phi_i\|_F^2 \\
&- 2\sum_{i<j}\mathrm{Tr}(\Phi_i^\top U_i^\top U_j \Phi_j).
\end{align*}

Now define the symmetric block matrix $S \in \mathbb{R}^{(\sum_i r_i)\times(\sum_i r_i)}$ with blocks
\[
S_{ii}=(M-1)I_{r_i}, \qquad S_{ij}=-U_i^\top U_j \;\;(i\neq j).
\]
A direct block expansion yields the identity
\[
\mathrm{Tr}(\Phi^\top S \Phi)
=
\sum_{i=1}^M (M-1)\|\Phi_i\|_F^2
- 2\sum_{i<j}\mathrm{Tr}(\Phi_i^\top U_i^\top U_j \Phi_j),
\]
which matches the expression above. Therefore,
\[
\sum_{i<j}\|U_i\Phi_i - U_j\Phi_j\|_F^2
=
\mathrm{Tr}(\Phi^\top S \Phi).
\]
The constraint $\sum_i \Phi_i^\top \Phi_i = I_R$ is exactly $\Phi^\top \Phi = I_R$.

Now let us solve the constrained trace minimization.
We have reduced Eq.~\ref{eq:gcca_objective} to
\[
\min_{\Phi^\top \Phi = I_R}\ \mathrm{Tr}(\Phi^\top S \Phi).
\]
Since $S$ is symmetric, by the Rayleigh--Ritz/Ky Fan variational characterization, the minimum is attained when the columns of $\Phi$ span the eigenspace associated with the $R$ smallest eigenvalues of $S$. Equivalently, an optimal solution $\Phi^\star$ is obtained by taking as columns the eigenvectors corresponding to the $R$ smallest eigenvalues of $S$ (orthonormalized), and partitioning $\Phi^\star$ into blocks $\{\Phi_i^\star\}_{i=1}^M$.

This proves that the global minimizer of Eq.~\ref{eq:gcca_objective} is given by the bottom-$R$ eigenvectors of $S$, and the objective admits the trace form claimed in the theorem.
\end{proof}

\begin{corollary}
Let $Y_i^\star = U_i \Phi_i^\star$ denote the aligned embeddings obtained from Theorem~\ref{thm:gcca_solution}. Among all embeddings representable within the retained subspaces $\{U_i\}$ and satisfying the global orthonormality constraint, $\{Y_i^\star\}$ minimize the total cross-space mismatch energy.
\label{corr:1}
\end{corollary}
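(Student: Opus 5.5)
The corollary is essentially a reinterpretation of Theorem~\ref{thm:gcca_solution}, so I will prove it by reduction rather than by a new argument. First I will make the feasible class precise. An embedding ``representable within the retained subspaces'' is a family $\{Y_i\}$ with $Y_i = U_i\Phi_i$ for some $\Phi_i \in \mathbb{R}^{r_i\times R}$. Since $U_i$ has orthonormal columns, the map $\Phi_i \mapsto U_i\Phi_i$ is injective and $\Phi_i = U_i^\top Y_i$. So the feasible embeddings are in bijection with the feasible coefficient families. I will read the global orthonormality constraint as $\sum_i \Phi_i^\top\Phi_i = I_R$. Using $U_i^\top U_i = I_{r_i}$, this is the same as $\sum_i Y_i^\top Y_i = I_R$, so the constraint can be stated on either side of the bijection.

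Second, I will transport the objective. The mismatch energy $E(\{Y_i\}) = \sum_{i<j}\|Y_i - Y_j\|_F^2$ equals the objective of Eq.~\ref{eq:gcca_objective} evaluated at $\{\Phi_i\}$. By the trace identity in Theorem~\ref{thm:gcca_solution}, this equals $\mathrm{Tr}(\Phi^\top S\Phi)$ with $\Phi^\top\Phi = I_R$.

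Third, I will invoke the theorem's minimization statement. The stacked bottom-$R$ eigenvectors $\Phi^\star$ achieve the minimum $\sum_{k=1}^R \lambda_k(S)$ of $\mathrm{Tr}(\Phi^\top S\Phi)$ over all $\Phi$ with orthonormal columns; this is the Ky Fan bound. Hence for any feasible $\{Y_i\}$ we get $E(\{Y_i\}) = \mathrm{Tr}(\Phi^\top S\Phi) \ge \mathrm{Tr}(\Phi^{\star\top}S\Phi^\star) = E(\{Y_i^\star\})$, which is the claim.

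The only delicate point is the first step: fixing what ``representable'' and ``global orthonormality'' mean so that the bijection and the constraint equivalence hold exactly. If the corollary is instead meant to constrain $\sum_i Y_i^\top Y_i$ directly, the identity $Y_i^\top Y_i = \Phi_i^\top\Phi_i$ covers that reading too. I will also note that the minimizer is unique only up to right multiplication by an orthogonal $R\times R$ matrix, and up to the choice of basis within a degenerate eigenspace. For that reason I will state the result as attaining the minimal energy rather than as uniqueness.
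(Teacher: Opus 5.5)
Your proposal is correct and takes the same route as the paper: you identify the mismatch energy with the objective of Eq.~\ref{eq:gcca_objective}, observe that the feasible sets coincide, and invoke the global optimality of $\Phi^\star$ from Theorem~\ref{thm:gcca_solution}. Your explicit bijection $\Phi_i = U_i^\top Y_i$ and the identity $Y_i^\top Y_i = \Phi_i^\top \Phi_i$ just make rigorous the paper's bare assertion that the two feasible sets coincide.
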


\begin{proof}
By definition, the total cross-space mismatch energy is exactly the objective in Eq.~\ref{eq:gcca_objective},
\begin{align*}
\sum_{1\le i<j\le M}\|Y_i - Y_j\|_F^2
=
\sum_{1\le i<j\le M}\|U_i\Phi_i - U_j\Phi_j\|_F^2, \\
\qquad Y_i=U_i\Phi_i.
\end{align*}
The feasible set in Corollary~\ref{corr:1} coincides with the feasible set of Eq.~\ref{eq:gcca_objective},
namely all collections $\{\Phi_i\}$ satisfying the global orthonormality constraint
$\sum_{i=1}^M \Phi_i^\top \Phi_i = I_R$ (equivalently $\Phi^\top\Phi=I_R$ for $\Phi=[\Phi_1;\dots;\Phi_M]$).
Theorem~\ref{thm:gcca_solution} states that $\{\Phi_i^\star\}$ is a \emph{global minimizer} of Eq.~\ref{eq:gcca_objective}.
Therefore, the corresponding embeddings $Y_i^\star = U_i\Phi_i^\star$ globally minimize the mismatch energy
over the stated admissible class. 
\end{proof}

\begin{corollary}
Let $G := [Y_1^\star | \dots | Y_M^\star] \in \mathbb{R}^{N \times MR}$. The matrix $B \in \mathbb{R}^{N \times R}$ formed by the top $R$ left singular vectors of $G$ provides an orthonormal basis for the dominant shared latent subspace induced by the aligned spaces.
\label{corr:2}
\end{corollary}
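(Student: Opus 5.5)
The plan is to first make precise what ``dominant shared latent subspace'' means, and then show it is a direct consequence of the Eckart--Young--Mirsky / Ky Fan variational principle applied to the concatenated matrix $G$. I will take the dominant shared subspace to be any $R$-dimensional subspace $\mathcal{B}\subset\mathbb{R}^N$ that captures the maximal total energy of all aligned views at once. Equivalently, it is an orthonormal $B\in\mathbb{R}^{N\times R}$ maximizing
\[
J(B)=\sum_{i=1}^M \|B^\top Y_i^\star\|_F^2=\|B^\top G\|_F^2=\mathrm{Tr}\!\left(B^\top G G^\top B\right),\qquad B^\top B=I_R .
\]
The first step is this rewriting. It uses $GG^\top=\sum_i Y_i^\star Y_i^{\star\top}$, which follows from the block structure of $G$, so the criterion treats all views symmetrically.

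The second step applies the same Rayleigh--Ritz/Ky Fan argument already used in the proof of Theorem~\ref{thm:gcca_solution}, now to the symmetric positive semidefinite matrix $GG^\top$ and with a maximization instead of a minimization. The maximum of $J$ equals the sum of the $R$ largest eigenvalues of $GG^\top$. It is attained exactly when the columns of $B$ span the top-$R$ eigenspace, which is the span of the top $R$ left singular vectors of $G$. By Eckart--Young--Mirsky this is the same as saying that $BB^\top G$ is the best rank-$R$ approximation of $G$ in Frobenius norm. Orthonormality of $B$ is immediate from the SVD.

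The third step justifies the word ``shared'' by tying this back to Corollary~\ref{corr:1}. Let $\bar Y=\frac1M\sum_i Y_i^\star$. I will use the identity
\[
\sum_i Y_i^\star Y_i^{\star\top}=M\bar Y\bar Y^\top+\frac1M\sum_{i<j}(Y_i^\star-Y_j^\star)(Y_i^\star-Y_j^\star)^\top .
\]
It splits $GG^\top$ into a consensus part and a mismatch part. The trace of the mismatch part is $\frac1M$ times the energy minimized in Corollary~\ref{corr:1}, so at the GCCA optimum it is as small as possible over the admissible class. The leading eigenspace of $GG^\top$ is therefore dominated by $M\bar Y\bar Y^\top$. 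Weyl's inequality and the Davis--Kahan theorem then bound how far the top-$R$ left singular space of $G$ can be from the column space of $\bar Y$, in terms of that minimized mismatch.

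The main obstacle is this last step. The literal content of the corollary, orthonormality plus optimality of $B$, follows from steps one and two. Making the ``shared'' interpretation quantitative requires an eigengap assumption on $\bar Y\bar Y^\top$, and I would state that assumption explicitly rather than claim an unconditional result. If no gap is assumed, I would fall back on defining the dominant shared subspace through $J$ and leave the consensus decomposition as an interpretive remark.
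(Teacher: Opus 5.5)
Your first two steps are the paper's proof. The paper also sets up $\max_{B^\top B=I_R}\|B^\top G\|_F^2=\mathrm{Tr}(B^\top GG^\top B)$, applies the Ky Fan maximum principle to the positive semidefinite matrix $GG^\top$, and identifies its top-$R$ eigenvectors with the top-$R$ left singular vectors of $G$, so your proposal is correct and follows the same route. Your third step, the consensus/mismatch splitting of $GG^\top$ (the identity is correct), has no counterpart in the paper and is only an optional interpretive addition; note also that ``attained exactly when'' tacitly needs $\sigma_R(G)>\sigma_{R+1}(G)$, although the corollary only requires that $B$ be a maximizer.
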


\begin{proof}
Let $G := [Y_1^\star \mid \dots \mid Y_M^\star] \in \mathbb{R}^{N\times MR}$.
We consider the optimization problem
\[
\max_{B^\top B = I_R}\ \|B^\top G\|_F^2.
\]
Observe that
\[
\|B^\top G\|_F^2
=
\mathrm{Tr}\!\left((B^\top G)(B^\top G)^\top\right)
=
\mathrm{Tr}\!\left(B^\top G G^\top B\right).
\]
Define the symmetric positive semidefinite matrix $A := GG^\top \succeq 0$.
Then the problem becomes
\[
\max_{B^\top B = I_R}\ \mathrm{Tr}(B^\top A B).
\]
By the Ky Fan maximum principle (equivalently the variational characterization of the sum of the top eigenvalues),
the maximizer $B$ is given by the eigenvectors of $A$ associated with its largest $R$ eigenvalues.
Since the eigenvectors of $GG^\top$ are precisely the left singular vectors of $G$,
the solution is the matrix formed by the top $R$ left singular vectors of $G$.
\end{proof}

\begin{corollary}
The corresponding linear maps from the original feature spaces into the shared latent space are given by $Q_i := V_i \Phi_i^\star$, so that for any feature vector $z \in \mathbb{R}^{d_i}$, the embedding $z Q_i$ realizes the same aligned coordinates as the sample-side embedding $U_i \Phi_i^\star$, up to the scaling induced by the SVD convention. Equivalently, defining $Q_i := V_i \Sigma_i^{-1} \Phi_i^\star$ yields $X_i Q_i = U_i \Phi_i^\star$ exactly.
\label{corr:3}
\end{corollary}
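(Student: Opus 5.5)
The plan is to make explicit the convention that is implicit in Theorem~\ref{thm:gcca_solution}. There, each $U_i\in\mathbb{R}^{N\times r_i}$ has orthonormal columns. I take it to come from a (possibly truncated) thin SVD of the standardized features, $X_i \approx U_i\Sigma_iV_i^\top$. Here $V_i\in\mathbb{R}^{d_i\times r_i}$ has orthonormal columns, and $\Sigma_i=\mathrm{diag}(\sigma_{i,1},\dots,\sigma_{i,r_i})$ keeps only strictly positive singular values, so $\Sigma_i$ is invertible. Everything then reduces to one matrix identity relating $X_iV_i$ to $U_i\Sigma_i$.

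\textbf{Step 1.} I will show $X_iV_i = U_i\Sigma_i$ \emph{exactly}, even when the SVD is truncated. Write the full SVD $X_i=\sum_k \sigma_{i,k}u_{i,k}v_{i,k}^\top$, with the right singular vectors orthonormal. For each retained index $k\le r_i$ this gives $X_i v_{i,k} = \sigma_{i,k}u_{i,k}$, because every discarded component is orthogonal to $v_{i,k}$. Stacking these columns gives the identity, with no approximation error even though $X_i\neq U_i\Sigma_iV_i^\top$ in the truncated case. I expect this to be the main subtle point: one must not argue from $V_i^\top V_i=I$ alone applied to an approximate factorization. One must use that the columns of $V_i$ are genuine right singular vectors of $X_i$.

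\textbf{Step 2.} I will right-multiply by $\Sigma_i^{-1}\Phi_i^\star$. This gives $X_iV_i\Sigma_i^{-1}\Phi_i^\star = U_i\Phi_i^\star$, which is the exact claim for $Q_i:=V_i\Sigma_i^{-1}\Phi_i^\star$.

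\textbf{Step 3.} For the first, informal claim, I will note that $X_i(V_i\Phi_i^\star)=U_i\Sigma_i\Phi_i^\star$. So the map $Q_i=V_i\Phi_i^\star$ reproduces the sample-side coordinates $U_i\Phi_i^\star$ up to the per-direction rescaling $\Sigma_i$ induced by the SVD convention. The discrepancy disappears exactly when one whitens by $\Sigma_i^{-1}$, which is the precise meaning of ``up to the scaling''. Finally, since $Q_i$ is a fixed linear map on $\mathbb{R}^{d_i}$, applying it to an arbitrary row $z$ (for example a held-out sample) gives the out-of-sample extension $zQ_i$. This agrees with $U_i\Phi_i^\star$ on the training rows by Step 2, which completes the argument.
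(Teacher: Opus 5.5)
Your proposal is correct and follows the paper's argument: you substitute the SVD of $X_i$ and right-multiply by $V_i\Sigma_i^{-1}\Phi_i^\star$, which gives $U_i\Phi_i^\star$ exactly, or by $V_i\Phi_i^\star$, which gives $U_i\Sigma_i\Phi_i^\star$. Your Step 1 is more careful than the paper, which writes $X_i = U_i\Sigma_iV_i^\top$ as an equality even for a truncated SVD; you correctly note that only $X_iV_i = U_i\Sigma_i$ is needed, and that this holds exactly under truncation because the columns of $V_i$ are genuine right singular vectors.
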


\begin{proof}
Let $X_i = U_i\Sigma_i V_i^\top$ be a (possibly truncated) SVD with $U_i^\top U_i = I$ and $V_i^\top V_i = I$.
First consider the map $\widetilde{Q}_i := V_i \Sigma_i^{-1}\Phi_i^\star$ (assuming $\Sigma_i$ is invertible on the retained rank).
Then
\[
X_i \widetilde{Q}_i
=
(U_i\Sigma_i V_i^\top)(V_i\Sigma_i^{-1}\Phi_i^\star)
=
U_i \Phi_i^\star
=
Y_i^\star,
\]
which shows that $\widetilde{Q}_i$ yields an \emph{exact} feature-space realization of the aligned sample embedding.

For the map $Q_i := V_i\Phi_i^\star$, we similarly obtain
\[
X_i Q_i
=
(U_i\Sigma_i V_i^\top)(V_i\Phi_i^\star)
=
U_i\Sigma_i\Phi_i^\star.
\]
Thus, $Q_i$ realizes the same aligned coordinates up to the singular-value scaling induced by the SVD convention
(i.e., absorbing $\Sigma_i$ into the coordinates). This is the stated ``up to scaling'' relationship.
\end{proof}

\paragraph{Proof of Proposition~\ref{prop:consensus}.}
For unit vectors $\{\widehat{u}_{m,i}\}_{m=1}^M$, let $s := \sum_{m=1}^M \widehat{u}_{m,i}$ and
$c_i = s/\|s\|_2$. Then
\[
\frac{1}{M}\sum_{m=1}^M \langle \widehat{u}_{m,i}, c_i \rangle
= \frac{1}{M}\left\langle \sum_{m=1}^M \widehat{u}_{m,i}, \frac{s}{\|s\|_2}\right\rangle
= \frac{1}{M}\|s\|_2,
\]
which gives Eq.~\ref{eq:mean_to_consensus}. Moreover,
\begin{align*}
\|s\|_2^2 = \sum_{m=1}^M \|\widehat{u}_{m,i}\|_2^2 + 2\sum_{m<n}\langle \widehat{u}_{m,i}, \widehat{u}_{n,i}\rangle
= \\
M + 2\sum_{m<n}\langle \widehat{u}_{m,i}, \widehat{u}_{n,i}\rangle,
\end{align*}
which rearranges to Eq.~\ref{eq:pairwise_sum_identity}.
\hfill$\square$

\section{Ablations}
\label{app:geom_corr}
\subsection{Hyperparameters}
\label{app:hyperparams}
The trust penalty in GCPA acts as a soft trust-region as it allows the corrector to nudge a point toward the per-sample consensus direction, while discouraging excessive deviation from the original GPA universe direction.
In our implementation, $\tau$ specifies a tolerance on drift (no penalty is incurred below this threshold), and $\lambda$ controls the strength of the penalty once the drift exceeds $\tau$.

Rather than tuning these parameters per benchmark, we fix a single conservative $(\tau,\lambda)$ for all experiments.
The intent is to allow mild corrections (so retrieval can benefit from consensus nudging) while maintaining a clear notion of geometric trust in the underlying GPA universe.
To make the effect of these parameters transparent, we report a sensitivity sweep on \textsc{Market-1501} in \cref{fig:gcpa_hparam_heatmap}. It can be seen that there are no abrupt performance differences and mild settings can help in achieving a reasonable retrieval performance.

\begin{figure}[htpb!]
    \centering
    \includegraphics[width=\linewidth]{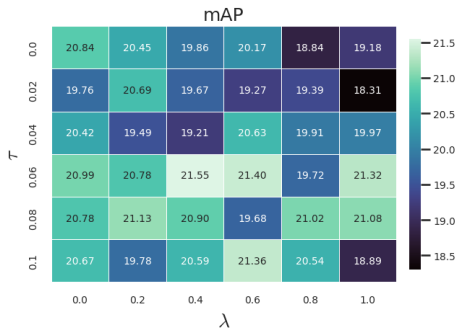}
    \caption{\textbf{Sensitivity of GCPA to the trust penalty on cross-camera retrieval.}
    We sweep the trust-region parameters $(\tau,\lambda)$ and report cross-camera mAP (\%, higher is better).}
    \label{fig:gcpa_hparam_heatmap}
\end{figure}

\subsection{Geometry changes on nudging}
\label{app:geomchange}

GCPA improves retrieval by applying a small correction in the GPA universe, which intentionally relaxes strict isometry.
To make this trade-off explicit, we quantify how much the correction changes the geometry of the universe embeddings.

Let $\hat{u}$ denote the row-wise $\ell_2$-normalized universe embedding produced by GPA, and let $\tilde{u}$ be the corresponding corrected embedding after applying the GCPA update (and renormalization).

Specifically, we measure drift as the angular deviation between $\tilde{u}$ and $\hat{u}$, using $\mathbb{E}[1-\langle \tilde{u}, \hat{u}\rangle]$.

Figure~\ref{fig:geom-change-drift} summarizes the drift percentage for different values of $\lambda$ and $\tau$ providing insights on how increasing $\lambda$ results in a decrease in drift whereas increase $\tau$ results in an increase.
Overall, GCPA introduces a controlled amount of drift and this is expected as unlike GPA, which is exactly isometric, GCPA is designed to allow limited geometric deformation when it increases cross-view agreement and improves retrieval. We fit the alignment using three different text encoders for this computation.

\begin{figure}[htpb!]
    \centering
    \includegraphics[width=\linewidth]{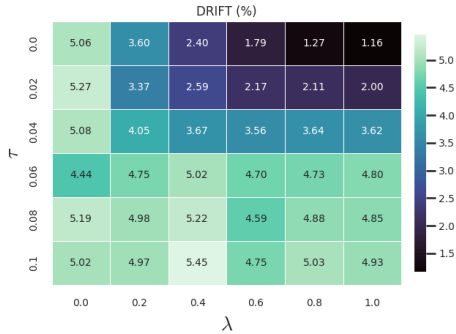}
    \caption{\textbf{Sensitivity of GCPA to the trust penalty on drift.}
    We sweep the trust-region parameters $(\tau,\lambda)$ and report median drift (\%, lower is better). The top-right corner consists of higher $\lambda$ and lower $\tau$ and therefore has lower drift values.}
    \label{fig:geom-change-drift}
\end{figure}

\subsection{Structure of the GCPA universe}
\label{app:structural_analysis}

We analyze how the GCPA correction changes the shared universe. We fit the
alignment on the training split and map each model's embeddings into the shared universe. We row-normalize the mapped embeddings, concatenate them across models, center the resulting matrix, and compute its singular values. Let $\sigma_1 \geq \sigma_2 \geq \cdots$ denote these singular values.

We report four quantities. First, $k_{95}$ is the smallest number of leading singular directions whose squared singular values account for at least $95\%$ of the sum of all squared singular values. Second, the effective condition number is $\sigma_1/\sigma_{k_{95}}$. Third, the stable rank is $\sum_i \sigma_i^2/\sigma_1^2$. Finally, we measure the overlap between the subspace spanned by the top $25$ principal directions of the GPA universe and the corresponding subspace for the GCPA universe. Lower overlap means that the correction changes the dominant directions of the shared space.

We first evaluate the 10-model \textsc{TED-Multi} setting used in the main experiments. On the test split, after GCPA correction, $k_{95}$ decreases from $518$ under GPA to $321$. The effective condition number increases from about $11.3$ to $16$, and the stable rank decreases from about $28.3$ to $15.2$. Thus, after correction, fewer singular directions are needed to account for most of the variation in the shared universe, and the leading directions explain a larger share of the representation structure.

We observe the same pattern on \textsc{Flickr8k}. We consider multimodal settings with HuBERT-base or HuBERT-large for audio, BERT for text, and either DINOv2-base or ResNet-18 for vision. On the test split, $k_{95}$ drops from $416$ to $308$ for HuBERT-base/BERT/DINOv2-base, from $430$ to $395$ for HuBERT-base/BERT/ResNet-18, from $558$ to $466$ for HuBERT-large/BERT/DINOv2-base, and from $429$ to $381$ for HuBERT-large/BERT/ResNet-18. These changes are again accompanied by a higher effective condition number and lower stable rank after correction.

The overlap between the top-$25$ GPA and GCPA subspaces is low, at most about $0.05$ across these settings. This shows that GCPA is not simply rescaling the same dominant directions found by GPA. Instead, the correction changes the leading structure of the shared universe while remaining anchored to the GPA reference through its correction objective. Overall, GCPA produces a more concentrated and more anisotropic universe, with dominant directions that differ substantially from those of the original GPA universe.

\subsection{Noisy-correspondence robustness on Flickr8k}
\label{app:flickr8k_noise}

We further evaluate robustness to corrupted training correspondences on
\textsc{Flickr8k} in a 10-space multimodal setting. The universe contains three audio spaces, HuBERT-B/960, HuBERT-L/960, and Wav2Vec2-B/960~\cite{baevski2020wav2vec}; two text spaces, BERT and ALBERT~\cite{lan2020albert}; and five image spaces, DINOv2-B, ResNet-18, ViT-B/16, ConvNeXt-B, and ResNet-50.

We corrupt the training correspondences for a subset of spaces before fitting the alignment, and evaluate mean cross-modal retrieval on the original test split. The corrupted spaces are HuBERT-B/960, BERT, ALBERT, ResNet-18, and ViT-B/16. This evaluates how the aligned universe behaves when part of the supervision used to construct it becomes unreliable.

\begin{table}[htpb!]
\centering
\resizebox{.3\textwidth}{!}{
\begin{tabular}{lccc}
\toprule
Corruption & GPA & GCCA & GCPA \\
\midrule
25\% & 0.266 & 0.296 & \textbf{0.347} \\
50\% & 0.199 & 0.219 & \textbf{0.271} \\
75\% & 0.091 & 0.110 & \textbf{0.126} \\
90\% & 0.041 & 0.052 & \textbf{0.056} \\
\bottomrule
\end{tabular}
}
\caption{\textbf{Noisy-correspondence robustness on multimodal retrieval.} Values represent $R@1$ cross-modal retrieval performance.}
\label{tab:flickr8k_noise}
\vspace{-0.25cm}
\end{table}

Table~\ref{tab:flickr8k_noise} shows that GCPA achieves the highest retrieval performance at every corruption level. The gap is largest at moderate corruption levels, where enough correct correspondence signal remains for the consensus correction to improve the aligned universe. As the corruption level increases, all methods degrade and the gap narrows, consistent with the fact that a severely corrupted training signal provides less reliable structure for alignment. Even at $75\%$ and $90\%$ corruption, GCPA remains above both GPA and GCCA.

\subsection{Sensitivity to noisy test embeddings}
\label{app:ood-sensitivity}

The correspondence-noise and weak-link experiments perturb the supervision used to fit the alignment. Here, we instead keep the learned universe fixed and perturb only the test embeddings. This isolates the effect of noisy sample features at inference time.

We run this experiment on \textsc{Flickr8k} using HuBERT-B/960 for audio, BERT for text, and DINOv2-B for vision. For each view, we add Gaussian noise to the test embeddings. The noise is scaled per feature dimension by the corresponding training-set standard deviation, and we vary the noise level from $0$ to $1$. We then map the perturbed embeddings into the fixed learned universe and measure cross-encoder retrieval. We also report drift, defined as the mean value of $1-$ cosine similarity between the original and perturbed test representations after projection into the shared universe.

\begin{table}[hptb!]
\centering
\resizebox{.4\textwidth}{!}{
\begin{tabular}{llrrr}
\toprule
Method & Noise & Avg. R@1 & Worst R@1 & Drift \\
\midrule
\multirow{6}{*}{GPA}
& 0.00 & 0.373 & 0.155 & 0.000 \\
& 0.05 & 0.372 & 0.153 & 0.001 \\
& 0.10 & 0.372 & 0.157 & 0.005 \\
& 0.20 & 0.369 & 0.156 & 0.020 \\
& 0.50 & 0.325 & 0.130 & 0.108 \\
& 1.00 & 0.197 & 0.068 & 0.297 \\
\midrule
\multirow{6}{*}{GCPA}
& 0.00 & 0.424 & 0.186 & 0.000 \\
& 0.05 & 0.424 & 0.181 & 0.001 \\
& 0.10 & 0.423 & 0.186 & 0.003 \\
& 0.20 & 0.413 & 0.178 & 0.013 \\
& 0.50 & 0.365 & 0.152 & 0.073 \\
& 1.00 & 0.234 & 0.102 & 0.226 \\
\bottomrule
\end{tabular}
}
\caption{\textbf{Effect of Gaussian feature noise on the learned universe.} We report cross-encoder retrieval performance and mean drift from the original test projection into the universe.}
\label{tab:gaussian_universe_noise}
\vspace{-0.25cm}
\end{table}

Table~\ref{tab:gaussian_universe_noise} shows that retrieval degrades gradually as the feature noise increases, and the drift in the shared universe increases accordingly. GCPA remains stronger than GPA at every noise level. It also shows lower drift than GPA under the same perturbation level, indicating that the correction does not make the learned universe more sensitive to this form of test-time feature noise.

\subsection{Multimodal ablations}
\label{app:multimodal-ablation}

\paragraph{Effect of multimodal anchors.}
We examine whether the multimodal retrieval gains of GCPA are primarily due to including a multimodal encoder, such as CLIP, in the alignment universe. To test this, we run an ImageNet-1k~\cite{russakovsky2015imagenet} class-prompt
retrieval experiment with and without CLIP in the universe. We use 6000 images as the training set and 1000 as the test set with class stratification. For each ImageNet class, we construct a text prompt of the form ``a photo of a \{class\}'' and embed these prompts using BERT~\cite{devlin2019bert}. We then evaluate image$\leftrightarrow$text retrieval after alignment.

We fix the text encoder to BERT and compare five universe configurations:
(A) BERT+CLIP-B/32+DINOv2-B, (B) BERT+CLIP-L/14+DINOv2-B,
(C) BERT+DINOv2-B, (D) BERT+DINOv2-B+ViT-B/16, and
(E) BERT+DINOv2-B+ViT-B/16+RN50.

\begin{table}[htpb!]
\centering
\resizebox{.35\textwidth}{!}{
\begin{tabular}{lrrrr}
\toprule
Setting & PW & GPA & GCCA & GCPA \\
\midrule
A & 0.596 & 0.590 & 0.592 & \textbf{0.646} \\
B & 0.559 & 0.557 & 0.611 & \textbf{0.699} \\
C & 0.649 & 0.648 & 0.676 & \textbf{0.717} \\
D & 0.665 & 0.657 & 0.704 & \textbf{0.740} \\
E & 0.681 & 0.673 & 0.705 & \textbf{0.744} \\
\bottomrule
\end{tabular}
}
\caption{\textbf{R@1 on the ImageNet class-prompt retrieval experiment.} Settings A and B include CLIP in the universe, while C, D, and E do not.}
\label{tab:clipabl}
\vspace{-0.25cm}
\end{table}

Table~\ref{tab:clipabl} shows that GCPA achieves the best performance for all combinations of models. The strongest results are obtained in settings without CLIP, showing that the gains are not simply inherited from a pretrained multimodal anchor. The results instead point to the importance of how well the selected spaces support a useful shared consensus.

The previous ablation shows that the gains do not require CLIP to be part of the universe. We also evaluate a related setting where the universe is built from higher-capacity independently trained image and text encoders on \textsc{Flickr30k}. Across these settings, post-hoc alignment methods reach roughly $0.70$--$0.75$ R@1 and $0.90$--$0.95$ R@5, despite using image and text encoders that were not trained together. As a reference for the same split, CLIP-B/16 reaches $0.863/0.976$ R@1/R@5, as expected for a model trained directly with paired image--text supervision. Future work could study richer post-hoc multimodal alignment methods beyond the alignment families considered here.

\paragraph{Unequal feature dimensions.}
For retrieval experiments with heterogeneous feature dimensions, we use PCA fitted on the training split to map spaces to a common dimension. We compare this preprocessing with zero-padding to the maximum feature dimension on \textsc{Flickr8k}, using a 9-space multimodal universe without CLIP: three audio spaces (HuBERT-B, HuBERT-L, Wav2Vec2-B), three text spaces (BERT, GTR-T5~\cite{ni2022large}, ALBERT), and three image spaces (DINOv2-B, DINOv2-L, ResNet-18).

\begin{table}[h]
\centering
\begin{tabular}{lcc}
\toprule
Dimension matching & GPA & GCPA \\
\midrule
Zero-padding & 0.483 & 0.486 \\
PCA & \textbf{0.537} & \textbf{0.548} \\
\bottomrule
\end{tabular}
\caption{\textbf{Effect of dimensionality matching on Flickr8k multimodal retrieval.} Values are mean R@1 retrieval performance across encoder pairs.}
\label{tab:dimension_ablation}
\end{table}

Table~\ref{tab:dimension_ablation} shows GCPA improves over GPA in each case. We generally observe the PCA-based preprocessing to be beneficial for the alignment baselines in retrieval experiments with heterogeneous spaces. We do not study dimensionality reduction in depth, and leave a broader study of dimension-matching strategies to future work
(Section~\ref{app:limitations}).

\subsection{Model choice and universe composition}
\label{app:model-selection-stability}

We study how the choice of spaces affects the corrected universe. This analysis is distinct from the continual-addition experiment in
Appendix~\ref{app:continual-addition}. There, the question is whether a new space can be integrated efficiently after a universe has already been learned. Here, we ask which candidate spaces improve a fixed universe and which ones can make the correction less reliable.

We run a probing experiment on CIFAR-100 where the base universe is formed from five standard RGB vision encoders: DeiT-Tiny, DeiT-Small, ViT-B/16 at $224$ resolution, ViT-B/16 at $384$ resolution, and BEiT-B. We then add one candidate space at a time and measure its effect on the original base models. The candidate set contains standard RGB DINOv2-S/B/L, ResNet-18/50, and ConvNeXt-B spaces, together with Canny-corrupted variants of DINOv2-B, ResNet-18, and ConvNeXt-B.

For each candidate, we report three quantities. Consensus alignment is the mean cosine similarity between the candidate projection and the base-model consensus in the updated universe. $\Delta$ Acc. is the change in probing accuracy on the original base models after adding the candidate. Candidate Acc. is the mean cross-model probing accuracy between the candidate and the base models after adding the candidate to the universe.

\begin{table}[t]
\centering
\resizebox{\linewidth}{!}{
\begin{tabular}{lrrr}
\toprule
Candidate & Consensus align. & $\Delta$ Acc. & Candidate Acc. (\%) \\
\midrule
DINOv2-S RGB & 0.846 & +0.26 & 76.78 \\
DINOv2-B RGB & 0.807 & +0.99 & 80.54 \\
DINOv2-B Canny & 0.644 & -0.88 & 75.36 \\
DINOv2-L RGB & 0.765 & +1.47 & 82.33 \\
ResNet-18 RGB & 0.770 & -0.21 & 66.34 \\
ResNet-18 Canny & 0.604 & -0.91 & 57.10 \\
ResNet-50 RGB & 0.740 & +1.30 & 70.39 \\
ConvNeXt-B RGB & 0.860 & +0.54 & 74.23 \\
ConvNeXt-B Canny & 0.742 & -0.23 & 69.19 \\
\bottomrule
\end{tabular}
}
\caption{\textbf{Model-choice analysis on CIFAR-100.} Each candidate space is added to the same fixed base universe.}
\label{tab:universe_guidelines}
\vspace{-0.25cm}
\end{table}

Table~\ref{tab:universe_guidelines} shows that model capacity, candidate accuracy, and consensus alignment do not by themselves determine whether a candidate improves the universe. DINOv2-L has the strongest candidate accuracy ($82.33\%$) and gives the largest base-model gain ($+1.47$ points), but ResNet-50 gives a comparable gain ($+1.30$ points) despite lower candidate accuracy ($70.39\%$). ConvNeXt-B has the highest consensus alignment ($0.860$), but gives only a modest base-model gain ($+0.54$ points). Thus, useful candidates are not simply the largest, strongest, or most consensus-aligned models in isolation; the diagnostics must be read together.

The Canny-corrupted candidates have lower consensus alignment than their RGB counterparts. DINOv2-B drops from $0.807$ to $0.644$, ResNet-18 drops from $0.770$ to $0.604$, and ConvNeXt-B drops from $0.860$ to $0.742$. DINOv2-B Canny reduces base accuracy by $0.88$ points, ResNet-18 Canny reduces it by $0.91$ points, and ConvNeXt-B Canny reduces it by $0.23$ points. Their candidate accuracies remain lower than their RGB counterparts. This suggests that risky candidates are those with lower cosine similarity to the existing base model consensus and weak task utility.

The problematic cases are those where lower consensus alignment coincides with weak or negative gains on the base universe. Taken together, the results suggest that model choice should consider agreement with the existing base-model consensus, the effect on the original base models, and the candidate's cross-model utility.

\paragraph{Effect on within-space class structure.}
\label{app:selfrefinement}

We also compare the original spaces with their projections into the learned universe. To measure class separation, we use the mean Fisher ratio, which is higher when class centers are farther apart and samples within each class are closer to their own class center. Averaged across all encoders in this experiment, the mean Fisher ratio is $0.452$ for the original standardized spaces, $0.452$ after projection into the GPA universe, and $0.817$ after projection into the GCPA universe.

These results suggest that the correction can improve class separation after projection into the shared universe. We leave a more systematic study of this self-refinement effect to future work.

\section{Geometry correction (GCPA) algorithm}
\label{app:gcpa_details}

This section specifies the universe-level geometry correction used by \textbf{GCPA}.
We first fit an $M$-way orthogonal universe using GPA, obtaining per-space rotations
$\{\Omega_m\}_{m=1}^M$ and universe embeddings $X_m^\star = X_m \Omega_m \in \mathbb{R}^{N\times d}$.
GCPA then applies a shared correction in universe coordinates that encourages agreement across spaces while keeping the corrected directions close to the GPA universe.

\paragraph{Consensus target.}
Let $u_{m,i}$ denote the $i$th row of $X_m^\star$.
 We form per-space unit directions
$\hat{u}_{m,i} = u_{m,i}/\|u_{m,i}\|_2
$ and define a per-sample consensus direction
\begin{equation}
c_i
=
\mathrm{norm}\!\left(\frac{1}{M}\sum_{m=1}^M \hat{u}_{m,i}\right),
\qquad
C = [c_1;\dots;c_N] \in \mathbb{R}^{N\times d},
\label{eq:consensus_dir}
\end{equation}
where $\mathrm{norm}(\cdot)$ denotes row-wise $\ell_2$ normalization.

\paragraph{Correction module.}
We learn a shared residual map $T_\theta:\mathbb{R}^d\to\mathbb{R}^d$ applied to row-normalized universe embeddings.
It is initialized near the identity and outputs a row-normalized corrected direction,
\begin{equation}
T_\theta(u)=\mathrm{norm}\!\big(\hat{u} + \Delta_\theta(\hat{u})\big),
\qquad \hat{u}=\mathrm{norm}(u),
\label{eq:gcpa_corrector}
\end{equation}
where $\Delta_\theta$ is a small MLP shared across all spaces.

\paragraph{Training objective with trust penalty.}
We train $T_\theta$ to match the consensus direction $c_i$ from universe vectors $u_{m,i}$ sampled across spaces and indices.
For a sampled pair $(u_{m,i},c_i)$, let $y=T_\theta(u_{m,i})$ and $\hat{u}=\mathrm{norm}(u_{m,i})$.
We use the cosine loss
\[
\ell_{\mathrm{align}}(y,c_i)=1-\langle y,c_i\rangle,
\]
and measure deviation from the original GPA direction by
\[
\ell_{\mathrm{drift}}(y,\hat{u})=1-\langle y,\hat{u}\rangle.
\]

The trust penalty activates only when drift exceeds a tolerance $\tau$:
\begin{equation}
\ell_{\mathrm{trust}}(y,\hat{u}) = \max\{0, \ell_{\mathrm{drift}}(y,\hat{u}) - \tau\}.
\end{equation}
The final objective for a minibatch is
\begin{equation}
\mathcal{L}_{\mathrm{GCPA}}
=
\mathbb{E}\big[\ell_{\mathrm{align}}\big]
\;+\;
\lambda\,\mathbb{E}\big[\ell_{\mathrm{trust}}\big],
\label{eq:gcpa_loss}
\end{equation}
with fixed $(\tau,\lambda)$ across all benchmarks.

\paragraph{Optimization.}
We optimize Eq.~\ref{eq:gcpa_loss} on the training split using a standard minibatch procedure.

\paragraph{Using the corrector at inference.}
For a source space $m$, the universe map becomes
\begin{equation}
\mathrm{toU}(x;m)
=
\begin{cases}
x\Omega_m, & \text{GPA} \\
T_\theta(x\Omega_m), & \text{GCPA}.
\end{cases}
\end{equation}
Mapping back into a target space $n$ is $\mathrm{fromU}(u;n)=u\Omega_n^\top$.
Because $T_\theta$ is not restricted to be orthogonal, strict orthogonal cycle consistency need not hold after correction; we therefore analyze the correction separately (Appendix~\ref{app:geom_corr}).

\section{Shared-basis alignment (GCCA) algorithm}
\label{app:gcca_algo}

We provide a compact algorithmic summary of the shared-basis alignment used in our experiments.

\begin{algorithm}[t]
\caption{GCCA for $M$-way shared-basis alignment}
\label{alg:gcca}
\begin{algorithmic}[1]
\REQUIRE Matched representations $\{X_m \in \mathbb{R}^{N \times d_m}\}_{m=1}^M$, target rank $R$
\ENSURE Space-to-shared-basis maps $\{Q_m \in \mathbb{R}^{d_m \times R}\}_{m=1}^M$

\STATE For each space $m$, compute a (possibly truncated) thin SVD:
\[
X_m = U_m \Sigma_m V_m^\top .
\]
\STATE Let $r_m$ denote the retained rank of the thin SVD for space $m$ (so $\Sigma_m \in \mathbb{R}^{r_m\times r_m}$ is invertible).

\STATE Form the block matrix $S \in \mathbb{R}^{(\sum_m r_m)\times(\sum_m r_m)}$ with blocks
\[
S_{mm}=(M-1)I_{r_m}, \qquad S_{mn}=-U_m^\top U_n \;\; (m\neq n).
\]

\STATE Compute the $R$ eigenvectors of $S$ with smallest eigenvalues and stack them as
\[
\Phi^\star = \begin{bmatrix}\Phi_1^\star \\ \vdots \\ \Phi_M^\star \end{bmatrix},
\qquad \Phi_m^\star \in \mathbb{R}^{r_m \times R}.
\]

\STATE Define the shared-basis embeddings for samples in each space:
\[
Y_m := U_m \Phi_m^\star \in \mathbb{R}^{N \times R}.
\]

\STATE Define the feature-side maps into the shared basis:
\[
Q_m := V_m \Sigma_m^{-1} \Phi_m^\star,
\quad \text{so that} \quad
X_m Q_m = Y_m.
\]

\RETURN $\{Q_m\}_{m=1}^M$
\end{algorithmic}
\end{algorithm}

\section{Visualisation of Structure Differences}\label{app:visdiff}
Figure~\ref{fig:umap} provides a qualitative view of how alignment changes class structure. We apply UMAP to embeddings in each aligned space and visualize the resulting class clusters. Consistent with the quantitative results in Table~\ref{tab:massive_clustering}, GCPA produces more clearly separated class-specific clusters, reflecting a controlled deformation of the universe that improves cross-space comparability. In contrast, GPA preserves within-space geometry by construction, while GCCA emphasizes shared directions through a learned projection; both often yield cluster layouts that remain closer to the original structure and can exhibit more overlap between classes.

\section{Limitations and Broader Impact}\label{app:limitations} 
Our framework assumes that the aligned models share a meaningful latent consensus and that at least some reliable cross-space correspondences are available. When these assumptions fail, alignment quality can degrade. This is already reflected in our experiments: although the shared-universe formulation is robust to substantial correspondence corruption, the geometry-correction step can become unstable when the universe is supported by too few or too weak anchor models, or when the induced consensus is itself distorted. In such cases, the correction may reinforce mismatch rather than remove nuisance variation. More broadly, GCPA is best suited to settings where multi-model agreement reflects shared semantics, and is less appropriate when the participating spaces are highly non-isomorphic or encode incompatible task structure. Our retrieval experiments also use PCA to place heterogeneous feature spaces into a common dimension before alignment; this preprocessing may discard information, and alternative strategies for handling unequal dimensionality deserve further study.

A second limitation is that the shared-universe formulation improves scalability but does not remove computational tradeoffs. The number of learned maps grows linearly rather than quadratically, and new models can be added by fitting a single additional map into an existing universe, but both GPA fitting and GCPA correction still incur nontrivial cost as the number of models, dimensionality, and batch size increase. In addition, although GCPA is anchored to the GPA scaffold, it does not retain the strict isometric guarantees of purely orthogonal alignment. This tradeoff improves retrieval and clustering, but downstream uses requiring exact preservation of within-model geometry may still prefer GPA.

From a broader-impact perspective, multi-model alignment can have both positive and negative consequences. It may enable interoperability across independently trained systems and reduce the need for task-specific pairwise engineering. However, a shared universe may also transmit undesirable properties across models, including bias, spurious correlations, or other harmful structure already present in one or more constituent spaces. In sensitive domains, alignment may also increase privacy or security risks by making model internals more comparable across systems, potentially easing cross-model inference or representation leakage. We do not study these issues here, and view fair, privacy-preserving, and access-controlled alignment as important directions for future work.

\begin{figure*}
    \centering
    \includegraphics[width=1\linewidth]{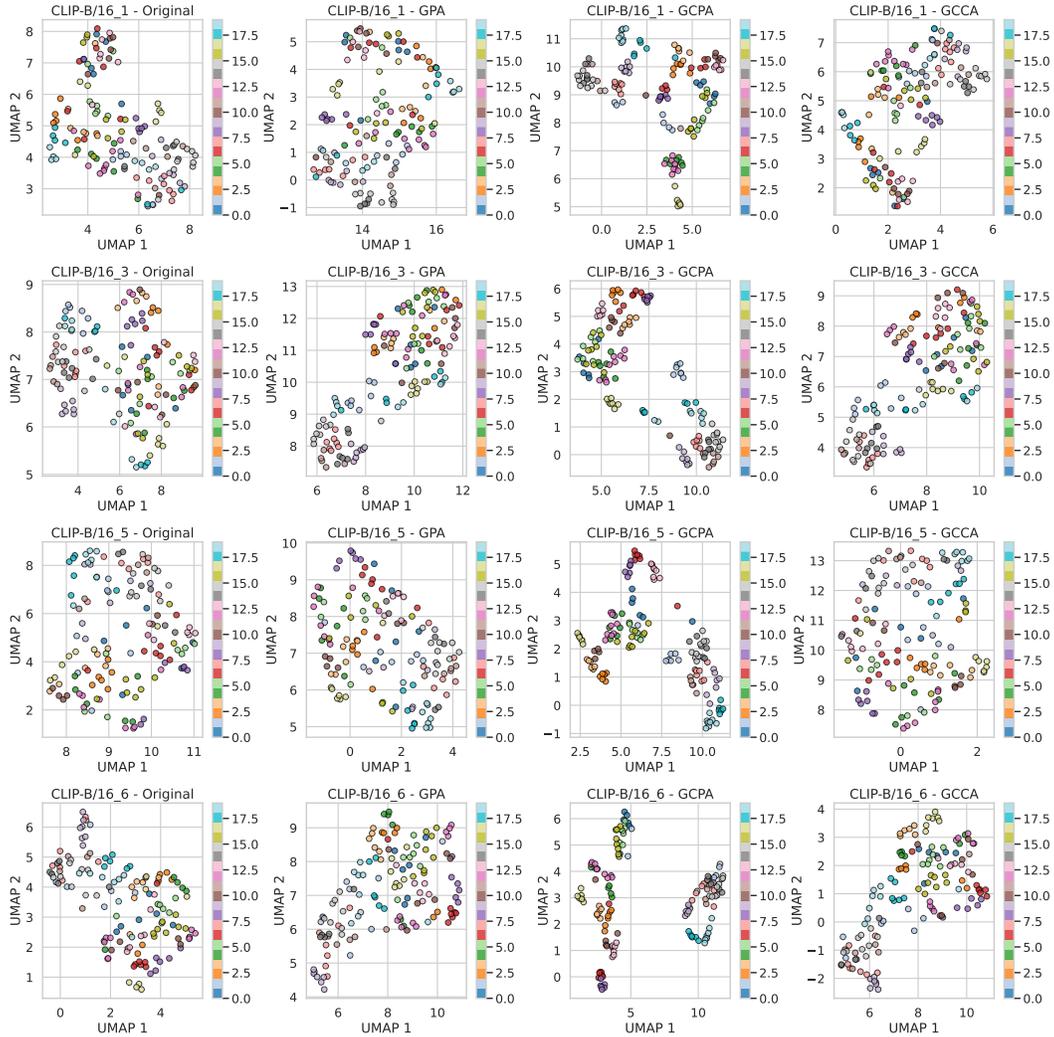}
    \caption{UMAP representation of images from 20 representative classes from the \textsc{Market-1501} dataset. The first column computes distances between images directly in the original model space, while the others are the universal spaces made using GPA, GCPA, and GCCA. Each line of subplots is made using input from a different camera. Dots with the same colors represent images from the same classes. }
    \label{fig:umap}
\end{figure*}

\end{document}